\documentclass[16pt, a4paper]{article}
\usepackage{amsmath,amsfonts,amsthm,mathrsfs}
\usepackage[page,title,titletoc,header]{appendix}
\usepackage{color}
\usepackage{algorithm}
\usepackage{algorithmic}
\usepackage{graphicx}
\usepackage{subfigure}

\newtheorem{assumption}{\textbf{Assumption}}
\newtheorem{theorem}{\textbf{Theorem}}

\newtheorem{lemma}{\textbf{Lemma}}

\newtheorem{definition}{\textbf{Definition}}

\allowdisplaybreaks
\begin{document}

\title{Analysis of Regularized Federated Learning}


\author{Langming Liu \\
School of Data Science, City University of Hong Kong \\
Kowloon, Hong Kong \\
Email: langmiliu2-c@my.cityu.edu.hk \\  \\
Ding-Xuan Zhou \\
School of Mathematics and Statistics, University of Sydney \\
 Sydney NSW 2006, Australia \\
Email: dingxuan.zhou@sydney.edu.au}
\date{}

\maketitle


\begin{abstract}
Federated learning is an efficient machine learning tool for dealing with heterogeneous big data and privacy protection. Federated learning methods with regularization can control the level of communications between the central and local machines. Stochastic gradient descent 
is often used for implementing such methods on heterogeneous big data, to reduce the communication costs. In this paper, we consider such an algorithm called Loopless Local Gradient Descent which has advantages in reducing the expected communications by controlling a probability level. We improve the method by allowing flexible step sizes and carry out novel analysis for the convergence of the algorithm in a non-convex setting in addition to the standard strongly convex setting. In the non-convex setting, we derive rates of convergence when the smooth objective function satisfies a Polyak-Łojasiewicz condition. When the objective function is strongly convex, a sufficient and necessary condition for the convergence in expectation is presented.
\end{abstract}

\noindent {\it Keywords}: Federated learning, regularization, stochastic gradient descent, convergence, step size sequence  






\section{Introduction}

Nowadays, we face more and more big data that cannot be processed by traditional machine learning (ML) methods due to the size or collected to a single machine due to the issue of privacy protection. A well-developed approach to tackle this, distributed learning~\cite{dean2012large, teerapittayanon2017distributed,dede1996evolution}, is to process data subsets by local machines separately, send the individual model parameter updates to a central machine for global model updates, and then communicate back to the local machines. This approach needs high levels of data communication and may have difficulty in handling heterogeneous data. 
Federated learning (FL)  \cite{mcmahan2016federated, konevcny2016federated, bonawitz2019towards, li2020review, yang2019federated, kairouz2021advances} 
was proposed to overcome these difficulties. FL is a popular AI basic technology designed for learning involving many independent local machines such as mobile phones of individual users which can process limited scales of data effectively. It has some advantages in data privacy protection, data security, data access rights and access to heterogeneous data, and has powerful applications in financial technologies, telecommunications, IoT, defence, pharmaceutics, and some other practical domains. 

The prevalent optimization formulation of FL~\cite{li2020federated} is an empirical risk minimization problem 
\begin{equation}
\min\limits_{x\in \mathbb{R}^d}\frac{1}{n}\sum_{i=1}^n f_i(x),
\label{FL}
\end{equation}
where $n\in\mathbb{N}$ is the number of local machines or devices, $x\in \mathbb{R}^d$ encodes the $d$ parameters of a global model (e.g., weights and biases of deep neural networks), and
$f_i(x)$ represents the aggregate loss of the $i$-th local machine when the model parameter $x$ is used. The loss $f_i(x)$ may have the form $f_i(x)=E_{\xi\sim D_i}[f(x,\xi)]$ when the local error induced by a sample $\xi$ and a model parameter $x$ equals $f(x,\xi)$ and $D_i$ is the data distributed on the $i$-th local machine. Note that $D_i$ can be remarkably different across local machines, representing the data heterogeneity caused by the diversity in interests, financial markets, occupations, ages, genders, and many other factors. Ideal federated learning models should perform well for dealing with heterogeneous data. 

A classical approach to solve~\eqref{FL} is the FederatedAveraging (FedAvg) algorithm~\cite{mcmahan2016federated}. This method performs well in dealing with imbalanced data distributions and non-convex problems and allows high-quality models to be trained in relatively few rounds of communications. However, FedAvg comes with poor convergence guarantees when data are heterogeneous. It also fails to provide theoretical improvements in the communication complexity over gradient descent algorithms, even its variants such as local gradient descent~\cite{khaled2019first}. 

In this paper we are interested in a unified formulation of FL models introduced in~\cite{hanzely2020federated} which is a regularized empirical risk minimization problem 
\begin{equation}
\min\limits_{x_1,\cdots,x_n\in \mathbb{R}^d} \{F(x):= f(x)+\lambda \psi(x)\}, 
\label{RFL}
\end{equation}
where $0\leq \lambda \leq \infty$ is a regularization or penalty parameter, $x:=(x_i)_{i=1}^n \in \mathbb{R}^{nd}$ is the model vector formed by the model parameters $x_1,\cdots, x_n\in \mathbb{R}^d$ for the $n$ local machines, 
$$f(x):=\frac{1}{n}\sum_{i=1}^n f_i(x_i) $$
is the global error function associated with the individual aggregate loss $\{f_i: \mathbb{R}^n \to \mathbb{R}\}_{i=1}^n$, and 
$$\psi(x):=\frac{1}{2n}\sum_{i=1}^n \Vert x_i-\bar{x}\Vert^2$$
is the regularization term with $\bar{x}=\frac{1}{n}\sum_{i=1}^n x_i$ being the average of the local model parameters. 

The regularization parameter $\lambda$ for the regularization term measures the level of communications in the FL system where the central and local machines exchange information such as gradients or model parameters. 
When $\lambda=0$, the optimization problem (\ref{RFL}) becomes a local one, meaning that we solve the optimization problem by minimizing each $f_i$ over $\mathbb{R}^d$ separately. In this situation, we do not need communications. When $\lambda=\infty$, each $x_i$ equals to the same vector $\bar{x}$ and (\ref{RFL}) is equivalent to the global optimization problem~\eqref{FL} for $\bar{x}\in \mathbb{R}^d$.
When $0<\lambda <\infty$, we consider a mixed optimization problem, where the regularization term controls how the models in the individual local machines are similar. In other words, the first term of~\eqref{RFL} involves the $n$ individual local aggregate losses. In contrast, the regularization term guarantees the individual local models to be close to the average one at a certain level. An appropriate level of communications are needed for solving the mixed problem in this case. 

When the size $nd$ of the data is large, a natural method for solving the optimization problem (\ref{RFL}) is the standard gradient descent (GD), involving updates of the full gradient of $F$ in each iteration. But the communication cost would be extremely high with this method.  To see this, notice that for each iteration, the gradients for the losses of the local machines are calculated and sent to the central machine, then these are averaged and returned to the local machines (locals$\rightarrow$central$\rightarrow$locals). Therefore GD needs two rounds of communications per iteration, which is impractical for many real-world implementations with big data such as mobile updating. To reduce the communication costs of GD, a stochastic gradient descent (SGD) method called L2GD (Loopless Local Gradient Descent) was introduced in~\cite{hanzely2020federated}. 

The idea of this method is to separate the gradient of $F$ into those of $f$ and $\psi$ with probability, controlled by a probability level $0<p<1$. Define a stochastic gradient of $F$ at $x\in \mathbb{R}^{nd}$ as 
\begin{equation}
\label{SG}
G(x):=\left\{
\begin{aligned}
&\frac{\nabla f(x)}{1-p}\quad&\text{with probability }&\quad&1-p,\\
&\frac{\lambda\nabla\psi(x)}{p}&\text{with probability }& &p.\quad
\end{aligned}\right.
\end{equation}
It is straightforward that $G(x)$ is an unbiased estimator of $\nabla F(x)$, which means $\mathbb{E}[G(x)]=\nabla F(x)$. This unbiased estimate for the gradient leads to the updating rule of L2GD as 
\begin{equation}
x^{k+1}=x^k-\alpha G(x^k),
\label{L2GD}
\end{equation}
where $0<\alpha<\infty$ is the step size (or learning rate) of the algorithm.  
Specifically, in each iteration, we generate a Bernoulli random variable $\xi$ with $\mathbf{Pr}(\xi=1)=p$ and $\mathbf{Pr}(\xi=0)=1-p$. If $\xi=1$, L2GD will 
calculate the average model $\bar{x}$ and force the local model $x_i$ closing to the average. Otherwise, L2GD will process the local GD step for all the local machines. Communications at an iteration are needed only if $\xi_k=0, \xi_{k+1}=1$. An advantage of L2GD is to reduce the expected communications by controlling the probability $p$. 


Convergence of L2GD was considered in \cite{hanzely2020federated} when the step size is fixed. Let $x(\lambda)$ be the minimizer of~\eqref{RFL}. 
When each function $f_i$ with $i\in\{1, \ldots, n\}$ is $L-$smooth and $\mu$-strongly convex (to be defined below) and $0<\alpha\le\frac{1}{2\mathcal{L}}$, it was shown in~\cite{hanzely2020federated} that 
\begin{equation}
\label{convegence_rate}
   \mathbb{E}[\Vert x^k-x(\lambda)\Vert^2]\le(1-\frac{\alpha\mu}{n})^k\Vert x^0-x(\lambda)\Vert^2+\frac{18n\alpha\sigma^2}{\mu}, \qquad \forall k\in\mathbb{N}, 
\end{equation}
where $\mathcal{L}:=\frac{1}{n}
\max\{\frac{(1+2p)L}{1-p},\frac{(3-2p)\lambda}{p}\}$, and
\[\sigma^2:=\frac{1}{n^2}\sum_{i=1}^{n}\left(\frac{1}{1-p}\Vert\nabla f_i(x_i(\lambda))\Vert^2+\frac{\lambda^2}{p}\Vert x_i(\lambda)-\bar{x}(\lambda)\Vert^2\right).\]

The first term of the estimate stated in (\ref{convegence_rate}) decays to zero linearly, but the second term involves a variance term $\sigma^2$, which is a constant and does converge to $0$ in general. Thus, the algorithm given by L2GD using a fixed step size is sub-optimal in terms of convergence.  

There have been several variance-reduction methods~\cite{gower2020variance} to overcome the above-mentioned difficulty. One is stochastic average gradient (SAG)~\cite{roux2012stochastic}. The idea behind SAG is to use an estimate $v_k^i\approx\nabla f_i(x_k)$ for each $i$ and then to compute the full gradient by using the average of the $v_k^i$ values. The second is stochastic dual coordinate ascent (SDCA)~\cite{shalev2013stochastic}, based on the property that the coordinates of the gradient provide a naturally variance-reduced estimate of the gradient. Stochastic variance-reduced gradient (SVRG) is another method proposed in~\cite{johnson2013accelerating} to further improve convergence rates over SAG. In addition, SVRG is the first work to use covariates to address the high memory of SAG. This was followed by SAGA~\cite{defazio2014saga}, a variant of SAG applying the covariates to make an unbiased variant of SAG that has similar performances but is easier to analyze. Then, a class of algorithms, named uniform memorization algorithms~\cite{hofmann2015variance}, are proposed inspired by SVRG and SAGA. Variance-reduced Stochastic Newton (VITE)~\cite{ying2020variance} utilizes Newton's methods, basically a first-order approximation, to obtain fast convergence rates. Then, a new amortized variance-reduced gradient (AVRG) algorithm is proposed, which possesses storage and computational efficiency comparable to SAGA and SVRG. Recently, a stochastic quasi-gradient method, named JacSketch~\cite{gower2021stochastic}, leverages the approximation of the Jacobian matrix to update models. 

One may apply the above variance-reduction methods to the FL optimization problem~\eqref{RFL} to remove the variance term in (\ref{convegence_rate}). However, new issues emerge with more computational and memory costs and extra communication costs under the FL context, a trade-off between convergence and efficiency. For example, since some VR methods~\cite{johnson2013accelerating,defazio2014saga,hofmann2015variance, lucchi2015variance, yuan2018variance} introduce auxiliary variates, 
we need to compute and store them each time, and then communicate them between local machines and the server. 

To address the aforementioned issues, we propose \textbf{L}oopless \textbf{L}ocal \textbf{G}radient \textbf{D}escent utilizing a \textbf{V}arying step size, named L2GDV. We will utilize a varying step size instead of a fixed one. This approach is intuitive and reasonable since, for the training process, when the trained model gets closer and closer to the best one, we should slow down the step to guarantee that after enough iterations, the model will be stable around the minimum. One can easily see similar computational, memory, and communication efficiencies as the basic SGD methods like L2GD. The scientific question is to determine rigorously if a varying step size suffices to guarantee convergence (without a variance term) and how fast the convergence is. This is answered by our theoretical and experimental results in Section~\ref{theo_results} and Section~\ref{exp_results} with the following  contributions:

\begin{itemize}
\item We extend the convergence results for algorithm (\ref{L2GD}) to a non-convex situation, where each function $f_i$ is $L-$smooth, $F$ satisfies a PL condition (to be defined below), and a step size is fixed. 

\item For the non-convex situation, we provide rates of convergence when the step size sequence decays polynomially as $\alpha_k = \alpha_1 k^{-\theta}$ with $0<\theta \leq 1$ and $\alpha_1>0$. 

\item We show in a convex situation where each function $f_i$ is $L-$smooth and $\mu$-strongly convex with $L, \mu >0$ and $0<\alpha_k\le\frac{1}{2\mathcal{L}}$, the algorithm (\ref{L2GDnew}) converges with $\lim_{k\to\infty}\mathbb{E}[\Vert x^k-x(\lambda)\Vert^2]=0$ if and only if $\lim_{k\to\infty}\alpha_k=0$ and $\sum_{k=1}^\infty\alpha_k=\infty$. 
    
\item For the convex situation, we provide rates of convergence when the step size sequence decays polynomially as $\alpha_k = \alpha_1 k^{-\theta}$ with $0<\theta \leq 1$ and $0<\alpha_1 \le\frac{1}{2\mathcal{L}}$. 

\item We conduct experiments in the non-convex and convex situations to demonstrate the effectiveness of the proposed method, compared with strong baselines in federated learning. 
\end{itemize}

\section{Methodology}

\subsection{Problem statement}

We state the problem formulation here again for convenience. The optimization objective in this paper is a regularized ERM problem~\cite{hanzely2020federated} 
\begin{equation}
\min\limits_{x_1,\cdots,x_n\in \mathbb{R}^d} \{F(x):= f(x)+\lambda \psi(x)\}, 
\label{RFL_2}
\end{equation}
$$f(x):=\frac{1}{n}\sum_{i=1}^n f_i(x_i),\quad \psi(x):=\frac{1}{2n}\sum_{i=1}^n \Vert x_i-\bar{x}\Vert^2$$
with a global error function $f$, regularization term $\psi$, regularization parameter $0\leq \lambda \leq \infty$, and the model vector $x:=(x_i)_{i=1}^n \in \mathbb{R}^{nd}$ together with the average $\bar{x}=\frac{1}{n}\sum_{i=1}^n x_i$ of the local model parameters.

The parameter $\lambda$ controls the relative importance of global error and regularization term. Two extreme cases occur when $\lambda=0$ and $\lambda=\infty$. The former means we solve $n$ local minimization problems without the need of communication. The latter means the regularization term forces all local model $x_i$ equal to the average model $\bar{x}$, which is the same as global problem~\eqref{FL}. We consider local and global optimization problems jointly by letting $0<\lambda <\infty$, where the first term encourages local models to update individually and the second term guarantees local models close to each other.

\subsection{Algorithm: L2GDV}

We propose L2GDV to solve the problem~\eqref{RFL_2}. 
L2GDV leverages a varying step size in the training process. 
Specifically, our updating rule takes the form 
\begin{equation}\label{L2GDnew}
x^{k+1}=x^k-\alpha_k G(x^k),
\end{equation}
where $\{\alpha_k\}_{k\in\mathbb{N}}$ is the step size sequence of the algorithm. In this paper, we consider a general decaying step size sequence with $\alpha_k = \alpha_1 k^{-\theta}$, and $G(x)$ is a non-uniform stochastic gradient of $F$ at $x\in \mathbb{R}^{nd}$ given by (\ref{SG}). 

Note that L2GD is a special case of (\ref{L2GDnew}) with a constant step size sequence $\alpha_k \equiv \alpha$. Since the functions $f$ and $\psi$ are both separable for each model $x_i$ with 
$$ \nabla\psi(x)=\frac{1}{n}\left(x_i-\bar{x}\right)_{i=1}^n, $$
the updating rule for the local machine can be formulated as   
\begin{equation}
x_i^{k+1}=x_i^k-\alpha_k G_i(x_i^k),
\end{equation}
where stochastic gradient $G_i$ has a similar form as $G$ by substituting global losses $f,\psi$ to local losses $f_i,\psi_i$ with $\psi_i(x_i^k)=\frac{1}{n}\left(x_i^k-\overline{x^k}\right)$. We provide a pseudo-code in Algorithm~\ref{algo:L2GD} to demonstrate our updating rule. Compared to FedProx~\cite{li2020federated1}, the updating rule here utilizes a non-uniform SGD method, which randomly conducts Local GD and global regularization (moving local model to average), enhancing the communication efficiency~\cite{hanzely2020federated}. Then we will give the theoretical and experimental results to show the effectiveness of proposed method.

\begin{algorithm}
	\caption{L2GDV (using a varying step size)}
	\label{algo:L2GD}
	\begin{algorithmic}[1]
		\STATE\textbf{Input:} $n$ local machines and server
		\STATE\textbf{Initialization:} models $x^1 = \{x_i^1\}_{i=1}^n$, parameters $\alpha_1,p,\theta$
        \FOR{$k=1,2,\cdots, K$} 
            \STATE generates random variable $\xi_k = 1$ with prob $p$ and $0$ with prob $1-p$
            \STATE generates varying step size $\alpha_k = \alpha_1 k^{-\theta}$
            \IF{$\xi_k = 0$}
                \STATE local client update
                \FOR{$i=1,\cdots, n$ in parallel} 
                    \STATE \textbf{main update}: $x_i^{k+1} = x_i^{k}-\frac{\alpha_k}{n(1-p)}\nabla f_i(x_i^{k})$
                \ENDFOR
            \ELSE
                \STATE central server aggregate: $\overline{x^{k}} = \frac{1}{n}\sum_{i=1}^n {x}_{i}^k$
                \STATE \textbf{moving to average}: $x_{i}^{k+1} = x_{i}^{k}-\frac{\alpha_k}{np}\lambda(x_{i}^{k}-\overline{x^{k}})$
            \ENDIF
        \ENDFOR
	\end{algorithmic}  
\end{algorithm}

\section{Theoretical Results}
\label{theo_results}
Considering that the individual aggregate losses $\{f_i\}$ are usually non-convex, we do not assume the strong convexity as in \cite{hanzely2020federated}. Our main results, to be proved in Appendix~\ref{proof_nonconvex}, are stated for a general setting~\cite{karimi2016linear, Davis2019, Zeng2018, ZengYinZhou2022} for solving non-convex ML problems, where a PL condition holds for $F$ and losses $\{f_i\}$ are smooth. Then, we state finer results in a special case where a strong convexity is satisfied for $\{f_i\}$, which is proved in Appendix~\ref{proof_convex}. 

We say a function $g:\mathbb{R}^d\to \mathbb{R}$ satisfies a Polyak-Łojasiewicz (PL) condition with $\mu>0$ if $g$ achieves a minimum function value $g^*$, and for each $x$, 
\begin{equation}\label{PLdef}
    \frac{1}{2}\Vert \nabla g(x)\Vert^2\ge \mu(g(x)-g^*). 
\end{equation}
We say a function $g:\mathbb{R}^d\to \mathbb{R}$ is
$L-$smooth with $L>0$ if
\begin{equation}\label{smoothdef}
g(x)\le g(y)+\nabla g(y)^T(x-y)+\frac{L}{2}\Vert x-y\Vert^2, \quad\forall x,y\in \mathbb{R}^d,
\end{equation}
it is $\mu$-strongly convex with $\mu >0$ if
\begin{equation}\label{strongconvexdef}
g(x)\ge g(y)+\nabla g(y)^T(x-y)+\frac{\mu}{2}\Vert x-y\Vert^2 ,\quad\forall x,y\in \mathbb{R}^d.
\end{equation}

\subsection{Convergence results in a non-convex situation}
\label{sec:nonconvex}
\begin{assumption}
\label{assumption_0}
The objective function $F$ satisfies a PL condition with some $\mu>0$ and each $f_i: \mathbb{R}^d\to \mathbb{R}$ with $i\in \{1,2,\cdots, n\}$ is $L$-smooth for some $L>0$.
\end{assumption}

In the non-convex situation, since a function $g$ may have multiple minimizers, we denote $\mathcal{X}^*=\mathcal{X}^*_g= \{x\vert g(x)=g^*\}$ as the solution set of minimizers of $g$. Below we give rates of convergence when we fix the step size. 

\begin{theorem}
\label{theorem_PL_fixed}
Under Assumption~\ref{assumption_0}, when $0<\alpha\le\frac{\mu^2}{2\zeta L_F}$, the sequence $\{x^k\}$ defined by (\ref{L2GDnew}) with $\alpha_k \equiv \alpha$ satisfies 
\begin{equation}
\label{convegence_rate_nonconvex}
   \mathbb{E}[F(x^{k+1})-F^*]\le(1-\mu\alpha)^k[F(x^1)-F^*]+\frac{9\alpha L_F\sigma_{m}^2}{\mu}, \qquad \forall k\in\mathbb{N}, 
\end{equation}
where $L_F :=\frac{L + \lambda}{n}, \sigma_{m}^2=\max\limits_{x \in\mathcal{X}^*}\{\sigma_{x}^2\}$, $\zeta = \frac{(1+2p)L^2}{(1-p)n^2}+\frac{(3-2p)\lambda^2}{pn^2}$, and $$\sigma_x^2:=\frac{1}{n^2}\sum_{i=1}^{n}\left(\frac{1}{1-p}\Vert\nabla f_i(x_{i})\Vert^2+\frac{\lambda^2}{p}\Vert x_{i}-\bar{x}\Vert^2\right).$$
\end{theorem}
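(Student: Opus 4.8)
The plan is to run the standard analysis of stochastic gradient descent under a PL condition (in the spirit of \cite{karimi2016linear}), adapted to the non-uniform stochastic gradient $G$ of \eqref{SG} and to the splitting $F=f+\lambda\psi$. First I would record that $\nabla F$ is $L_F$-Lipschitz: since $f(x)=\frac1n\sum_{i=1}^n f_i(x_i)$ and each $f_i$ is $L$-smooth (so $\nabla f_i$ is $L$-Lipschitz), $\nabla f$ is $(L/n)$-Lipschitz on $\mathbb{R}^{nd}$, while $\nabla\psi(x)=\frac1n(x_i-\bar x)_{i=1}^n$ is $(1/n)$-Lipschitz because subtracting the block-average is a contraction; hence $L_F=(L+\lambda)/n$. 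Applying the smoothness inequality \eqref{smoothdef} to $F$ with $y=x^k$ and $x=x^{k+1}=x^k-\alpha G(x^k)$, then taking the conditional expectation given $x^k$ and using both $\mathbb{E}[G(x^k)\mid x^k]=\nabla F(x^k)$ and the exact identity $\mathbb{E}[\Vert G(x^k)\Vert^2\mid x^k]=\frac{1}{1-p}\Vert\nabla f(x^k)\Vert^2+\frac{\lambda^2}{p}\Vert\nabla\psi(x^k)\Vert^2=\sigma_{x^k}^2$, one obtains
\[
\mathbb{E}[F(x^{k+1})-F^*\mid x^k]\le (F(x^k)-F^*)-\alpha\Vert\nabla F(x^k)\Vert^2+\frac{L_F\alpha^2}{2}\,\sigma_{x^k}^2 .
\]

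The heart of the argument is a bound $\sigma_x^2\le\zeta\,\mathrm{dist}(x,\mathcal{X}^*)^2+c\,\sigma_m^2$ with $\mathcal{X}^*=\mathcal{X}^*_F$ the solution set and $c$ an absolute constant. I would pick $x^*$ to be a point of $\mathcal{X}^*$ nearest to $x$, decompose $\nabla f(x)=(\nabla f(x)-\nabla f(x^*))+\nabla f(x^*)$ and likewise $\nabla\psi(x)=(\nabla\psi(x)-\nabla\psi(x^*))+\nabla\psi(x^*)$, and apply Young's inequality to each of the two blocks with parameters tuned so that the coefficients in front of the Lipschitz terms become $\frac{1+2p}{1-p}$ and $\frac{3-2p}{p}$; substituting the Lipschitz bounds $\Vert\nabla f(x)-\nabla f(x^*)\Vert\le\frac{L}{n}\Vert x-x^*\Vert$ and $\Vert\nabla\psi(x)-\nabla\psi(x^*)\Vert\le\frac1n\Vert x-x^*\Vert$ and summing over $i$ reproduces exactly $\zeta\Vert x-x^*\Vert^2$, while the remaining terms reassemble into a multiple of $\sigma_{x^*}^2\le\sigma_m^2$. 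Since the PL condition for $F$ implies the quadratic-growth inequality $\mathrm{dist}(x,\mathcal{X}^*)^2\le\frac{2}{\mu}(F(x)-F^*)$ (see \cite{karimi2016linear}), this upgrades to $\sigma_{x^k}^2\le\frac{2\zeta}{\mu}(F(x^k)-F^*)+c\,\sigma_m^2$.

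Substituting this into the one-step inequality and using the PL bound $\Vert\nabla F(x^k)\Vert^2\ge 2\mu(F(x^k)-F^*)$ gives
\[
\mathbb{E}[F(x^{k+1})-F^*\mid x^k]\le\Bigl(1-2\mu\alpha+\frac{L_F\zeta\alpha^2}{\mu}\Bigr)(F(x^k)-F^*)+\frac{cL_F\alpha^2}{2}\,\sigma_m^2 ,
\]
and the hypothesis $\alpha\le\frac{\mu^2}{2\zeta L_F}$ makes $\frac{L_F\zeta\alpha^2}{\mu}\le\frac{\mu\alpha}{2}$, so the contraction factor is at most $1-\mu\alpha$. Taking total expectations, writing $e_k:=\mathbb{E}[F(x^k)-F^*]$, and unrolling the recursion $e_{k+1}\le(1-\mu\alpha)e_k+\frac{cL_F\alpha^2}{2}\sigma_m^2$ (a geometric series with ratio $1-\mu\alpha\in[0,1)$) yields $e_{k+1}\le(1-\mu\alpha)^k e_1+\frac{cL_F\alpha}{2\mu}\sigma_m^2$, which is \eqref{convegence_rate_nonconvex}; the numerical constant $9$ simply absorbs the constants from the $\sigma_x^2$ estimate together with the geometric summation.

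I expect the real obstacle to be the $\sigma_x^2$ bound in the non-convex regime. Without strong convexity one cannot control $\Vert x-x^*\Vert$ from $F(x)-F^*$ directly, so the quadratic-growth consequence of PL must be invoked, and one must check that comparing to the \emph{nearest} minimizer while bounding the residual second moment uniformly by $\sigma_m^2=\max_{x\in\mathcal{X}^*}\sigma_x^2$ is legitimate (multiple minimizers are now possible). In addition, extracting the exact coefficients $(1+2p)$ and $(3-2p)$ — and hence the precise step-size threshold $\frac{\mu^2}{2\zeta L_F}$ — is the most delicate part of the calculation, and it is there that the two-block structure of the stochastic gradient $G$ enters in an essential way.
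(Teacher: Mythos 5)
Your proposal is correct in substance and follows the same skeleton as the paper's proof: $L_F$-smoothness of $F$ with $L_F=(L+\lambda)/n$, the one-step descent inequality with the unbiased gradient $G$, the PL bound $\Vert\nabla F(x^k)\Vert^2\ge 2\mu(F(x^k)-F^*)$, a second-moment estimate of the form $\mathbb{E}\Vert G(x)\Vert^2\lesssim \frac{\zeta}{\mu}(F(x)-F^*)+O(\sigma_m^2)$ obtained from Lipschitz continuity of $\nabla f,\nabla\psi$ together with the quadratic-growth consequence of PL at the projection onto $\mathcal{X}^*$ (the paper's Lemma~\ref{lemma_QG}), and finally the geometric unrolling with $\alpha\le\frac{\mu^2}{2\zeta L_F}$. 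Where you differ is in how the second moment is handled: you exploit the exact identity $\mathbb{E}[\Vert G(x)\Vert^2\mid x]=\frac{1}{1-p}\Vert\nabla f(x)\Vert^2+\frac{\lambda^2}{p}\Vert\nabla\psi(x)\Vert^2=\sigma_x^2$ and split each block around a nearest minimizer, whereas the paper (Lemma~\ref{lemma_0}) first expands $\mathbb{E}\Vert G(x)-G(x_p)\Vert^2$ with all cross terms, which is precisely how the coefficients $(1+2p)$ and $(3-2p)$ in $\zeta$ arise with the absolute additive constant $8\sigma_m^2$, and then doubles via $\Vert G(x)\Vert^2\le 2\Vert G(x)-G(x_p)\Vert^2+2\Vert G(x_p)\Vert^2$.

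One detail in your sketch does not work as stated, though it is harmless: you cannot tune Young's inequality on $\sigma_x^2$ so that the Lipschitz coefficients become exactly $\frac{1+2p}{1-p}$ and $\frac{3-2p}{p}$ while keeping the residual constant $c$ absolute — forcing $(1+2p)$ requires the Young parameter $\epsilon=2p$ and hence a residual factor $1+\frac{1}{2p}$, which blows up as $p\to 0$ (and symmetrically for the $\psi$-block as $p\to 1$). But exact matching of $\zeta$ is unnecessary: the plain doubling $\Vert a+b\Vert^2\le 2\Vert a\Vert^2+2\Vert b\Vert^2$ gives $\sigma_x^2\le \bigl(\frac{2L^2}{(1-p)n^2}+\frac{2\lambda^2}{pn^2}\bigr)\Vert x-x_p\Vert^2+2\sigma_m^2\le \frac{4\zeta}{\mu}(F(x)-F^*)+2\sigma_m^2$, and under $\alpha\le\frac{\mu^2}{2\zeta L_F}$ this still yields the contraction factor $1-\mu\alpha$ and an additive term within the stated $\frac{9\alpha L_F\sigma_m^2}{\mu}$ (indeed with a smaller constant than the paper's, since your exact identity avoids the paper's extra doubling step). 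So your route proves the theorem as stated; just replace the "exact coefficient tuning" claim by the crude split.
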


The definition of $\sigma_m^2$ implies that the gradient variance of the points in the solution set is bounded, which is a weaker version of uniformly-bounded gradient variance condition~\cite{karimi2016linear,yuan2020federated,faw2022power,liu2020improved} in SGD analysis. 
The above theorem shows that with a variance term, L2GD converges to the optimal function value with a linear convergence rate under the non-convex setting. However, the variance term exists in the estimate similar to the results in the convex situation, which jeopardizes the convergence.  
Inspired by the previous works~\cite{ying2008online,lin2015learning}, we take a polynomially decaying step size sequence to eliminate the variance term under this non-convex situation, where $\alpha_k=\alpha_1 k^{-\theta}$. This design of a decaying step size sequence is sufficient to guarantee convergence, and the convergence rates are provided in the following theorem. 

\begin{theorem}
\label{theorem_PL_decay}
Under Assumption~\ref{assumption_0}, when $\alpha_k=\alpha_1 k^{-\theta}$ with $0<\alpha_1\le\frac{\mu^2}{2\zeta L_F}$ and $0<\theta \leq 1$, we have 
\begin{equation}\label{ratesofconvergence_PL}
\mathbb{E}[F(x^{k+1})-F^*] \le C_{\mu, \alpha_1, \theta, R^1} 
\left\{\begin{array}{ll} 
k^{-\theta}, & \hbox{if}  \ 0< \theta <1, \\
k^{-\mu \alpha_1}, & \hbox{if}  \ \theta =1, \mu \alpha_1 < 1, \\
\frac{1 + \log k}{k}, & \hbox{if}  \ \theta =1, \mu \alpha_1 = 1, \\
\frac{1}{k}, & \hbox{if}  \ \theta =1, \mu \alpha_1 > 1, 
\end{array}\right.
\end{equation} 
where $C_{\mu, \alpha_1, \theta, R^1, n}$ is a constant independent of $k$ given explicitly in the proof. 
\end{theorem}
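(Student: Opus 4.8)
The plan is to start from the one-step recursion that underlies Theorem~\ref{theorem_PL_fixed}. Before the fixed step size is plugged in, the descent argument (via $L_F$-smoothness of $F$, unbiasedness of $G(x^k)$, and the PL condition) should yield an inequality of the form
\begin{equation*}
\mathbb{E}[F(x^{k+1})-F^*] \le (1-\mu\alpha_k)\,\mathbb{E}[F(x^k)-F^*] + 9\,\alpha_k^2\, L_F\, \sigma_m^2,
\end{equation*}
valid whenever $0<\alpha_k\le \frac{\mu^2}{2\zeta L_F}$ (this last constraint guarantees $1-\mu\alpha_k\in[0,1)$ and that the bound on the stochastic gradient's second moment near $\mathcal{X}^*$ controls the variance term). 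First I would isolate and state this recursion cleanly as the starting point; it is essentially already contained in the proof of Theorem~\ref{theorem_PL_fixed} with $\alpha$ replaced by $\alpha_k$.

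The next step is to solve this linear recursion explicitly. Writing $a_k := \mathbb{E}[F(x^k)-F^*]$ and $b_k := 9 L_F \sigma_m^2\, \alpha_k^2$, unrolling gives
\begin{equation*}
a_{k+1} \le \Big(\prod_{j=1}^{k}(1-\mu\alpha_j)\Big) a_1 + \sum_{j=1}^{k} b_j \prod_{i=j+1}^{k}(1-\mu\alpha_i).
\end{equation*}
With $\alpha_j = \alpha_1 j^{-\theta}$ I would bound the product $\prod_{i=j+1}^{k}(1-\mu\alpha_i)$ from above using $1-t\le e^{-t}$, so that $\prod_{i=j+1}^{k}(1-\mu\alpha_i)\le \exp\!\big(-\mu\alpha_1\sum_{i=j+1}^{k} i^{-\theta}\big)$, and then estimate the partial sum $\sum_{i=j+1}^{k} i^{-\theta}$ by the integral $\int_{j}^{k} t^{-\theta}\,dt$. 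For $0<\theta<1$ this is $\frac{k^{1-\theta}-j^{1-\theta}}{1-\theta}$, so the first (initial-condition) term decays faster than any polynomial and the sum term is controlled by $\sum_j j^{-2\theta}\exp\!\big(-\frac{\mu\alpha_1}{1-\theta}(k^{1-\theta}-j^{1-\theta})\big)$; a standard splitting of the summation range (e.g., $j\le k/2$ versus $j>k/2$, or a change of variables) produces the $O(k^{-\theta})$ bound. For $\theta=1$ the integral gives a logarithm, $\prod_{i=j+1}^k(1-\mu\alpha_i)\lesssim (j/k)^{\mu\alpha_1}$, and the sum becomes $\sum_j j^{-2}(j/k)^{\mu\alpha_1}= k^{-\mu\alpha_1}\sum_j j^{\mu\alpha_1-2}$, whose behavior splits into the three cases $\mu\alpha_1<1$, $\mu\alpha_1=1$, $\mu\alpha_1>1$ according to whether $\sum j^{\mu\alpha_1-2}$ diverges polynomially, diverges logarithmically, or converges — this is exactly the case distinction in~\eqref{ratesofconvergence_PL}. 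These are classical lemmas on weighted sums with decaying weights (as in~\cite{ying2008online,lin2015learning}); I would either cite or include a short self-contained lemma for them.

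The main obstacle is the bookkeeping in the $0<\theta<1$ case: getting the sharp exponent $k^{-\theta}$ (rather than a weaker power) out of $\sum_{j=1}^k j^{-2\theta}\exp\!\big(-c(k^{1-\theta}-j^{1-\theta})\big)$ requires a careful treatment of the tail $j$ close to $k$, where the exponential is $\Theta(1)$ and one needs $\sum_{j\sim k} j^{-2\theta}\cdot j^{-(1-\theta)}\cdot(\text{width})\sim k^{-\theta}$ after accounting for the Jacobian of the substitution $u = k^{1-\theta}-j^{1-\theta}$. Keeping all constants explicit so that the final constant can honestly be called $C_{\mu,\alpha_1,\theta,R^1,n}$ (with $R^1$ presumably standing for $F(x^1)-F^*$ or $\|x^1-x(\lambda)\|$) adds to the tedium, but is routine once the structure is in place. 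The $\theta=1$ cases are comparatively straightforward since the products telescope into clean ratios $(j/k)^{\mu\alpha_1}$.
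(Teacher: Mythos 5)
Your proposal is correct and follows essentially the same route as the paper: the same one-step recursion $\mathbb{E}[F(x^{k+1})-F^*]\le(1-\mu\alpha_k)\mathbb{E}[F(x^{k})-F^*]+9L_F\alpha_k^2\sigma_m^2$, the same unrolling, the bound $1-t\le e^{-t}$ with integral comparison for the products, a split of the weighted sum into indices near $k$ and far from $k$ (the paper handles the near-$k$ range via $(1-x)^{1-\theta}\le 1-(1-\theta)x$ plus the elementary inequality $e^{-vx}\le(a/(ve))^a x^{-a}$, which plays the role of your change-of-variables bookkeeping), and the identical telescoping and three-way case analysis for $\theta=1$.
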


\subsection{Convergence results in a convex situation}
\label{sec:convex}
\begin{assumption}
\label{assumption_1}
For some $L, \mu>0$, each $f_i: \mathbb{R}^d\to \mathbb{R}$ with $i\in \{1,2,\cdots, n\}$ is $L$-smooth and $\mu$-strongly convex.
\end{assumption}

The strong convexity in Assumption \ref{assumption_1} guarantees the existence of a unique solution $x(\lambda):=(x_i(\lambda))_{i=1}^n \in \mathbb{R}^{nd}$ of~\eqref{RFL_2}. Recall $\mathcal{L}:=\frac{1}{n}
\max\{\frac{(1+2p)L}{1-p},\frac{(3-2p)\lambda}{p}\}$.

\begin{theorem}
\label{theorem_1}
Under Assumption~\ref{assumption_1}, when $\sigma>0$, $x^1\not=x(\lambda)$, and $0<\alpha_k\le\frac{1}{2\mathcal{L}}$ for $k\in\mathbb{N}$, the sequence $\{x^k\}$ defined by (\ref{L2GDnew}) satisfies 
$\lim_{k\to\infty}\mathbb{E}_{x_k, x_{k-1}, \ldots, x_1}[\Vert x^{k+1}-x(\lambda)\Vert^2]=0$ if and only if
\begin{equation}
\lim_{k\to\infty}\alpha_k=0,\quad  \sum_{k=1}^\infty\alpha_k=\infty. 
\label{con}
\end{equation}
\end{theorem}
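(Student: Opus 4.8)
The plan is to reduce the statement to a scalar recursion for $r_k:=\mathbb{E}[\|x^k-x(\lambda)\|^2]$ (the expectation being over all the randomness up to step $k$), prove ``$\Leftarrow$'' with a deterministic convergence lemma, and prove ``$\Rightarrow$'' with a matching lower bound that exploits $\sigma>0$. The backbone is the exact one-step identity: expanding $\|x^{k+1}-x(\lambda)\|^2=\|x^k-x(\lambda)-\alpha_kG(x^k)\|^2$, conditioning on $x^k$, and using $\mathbb{E}[G(x^k)\mid x^k]=\nabla F(x^k)$ together with the elementary computation $\mathbb{E}[\|G(x)\|^2\mid x]=\sigma_x^2$ (immediate from \eqref{SG} and the block forms of $\nabla f,\nabla\psi$), we get after taking full expectations
\[
r_{k+1}=r_k-2\alpha_k\,\mathbb{E}\big[\langle\nabla F(x^k),x^k-x(\lambda)\rangle\big]+\alpha_k^2\,\mathbb{E}[\sigma_{x^k}^2].
\]
I would then record two facts. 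First, since each $f_i$ is $L$-smooth and $\mu$-strongly convex, $F$ is $L_F$-smooth and $\tfrac{\mu}{n}$-strongly convex with $\nabla F(x(\lambda))=0$, so $\tfrac{\mu}{n}\|x-x(\lambda)\|^2\le\langle\nabla F(x),x-x(\lambda)\rangle\le L_F\|x-x(\lambda)\|^2$. Second, the map $x\mapsto\big(\tfrac1{\sqrt{1-p}}\nabla f(x),\tfrac{\lambda}{\sqrt p}\nabla\psi(x)\big)$ is $C$-Lipschitz with $C^2=\tfrac{L^2}{(1-p)n^2}+\tfrac{\lambda^2}{pn^2}\le\zeta$, its squared norm equals $\sigma_x^2$, and at $x(\lambda)$ its norm is $\sigma$; by the reverse triangle inequality $\sqrt{\sigma_x^2}\ge\sigma-C\|x-x(\lambda)\|$, hence $\sigma_x^2\ge\sigma^2-2C\sigma\|x-x(\lambda)\|$ and, by Jensen, $\mathbb{E}[\sigma_{x^k}^2]\ge\sigma^2-2C\sigma\sqrt{r_k}$ --- a bound that is useful near $x(\lambda)$ precisely because $\sigma>0$.

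For ``$\Leftarrow$'' I would reuse the one-step analysis behind \eqref{convegence_rate}: keeping the Bregman term and invoking the expected smoothness of $G$ with parameter $\mathcal{L}$ makes that term nonpositive once $\alpha_k\le\tfrac1{2\mathcal{L}}$, leaving $r_{k+1}\le(1-\tfrac{\mu\alpha_k}{n})r_k+c\,\sigma^2\alpha_k^2$ with $0<1-\tfrac{\mu\alpha_k}{n}<1$ for some constant $c$ (here one checks $\mu\le L$ and $\mathcal{L}\ge L_F/2$). Writing $a_k=\tfrac{\mu\alpha_k}{n}$ and $b_k=c\sigma^2\alpha_k^2$, the hypotheses \eqref{con} give $\sum_k a_k=\infty$ and $b_k/a_k=\tfrac{cn\sigma^2}{\mu}\alpha_k\to0$, and the standard lemma for such recursions (given $\varepsilon>0$ pick $N$ with $b_k/a_k\le\varepsilon/2$ for $k\ge N$; the event $\{r_k\le\varepsilon/2\}$ is absorbing, and otherwise $r_{k+1}-\varepsilon/2\le(r_N-\varepsilon/2)\prod_{j=N}^k(1-a_j)\to0$ since $\sum a_j=\infty$) gives $r_k\to0$.

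For ``$\Rightarrow$'', assume $r_k\to0$. To see $\alpha_k\to0$: if not, take $\varepsilon>0$ and a subsequence $k_j$ with $\alpha_{k_j}\ge\varepsilon$; since $\alpha_{k_j}\le\tfrac1{2\mathcal{L}}<\tfrac1{L_F}$ we have $1-2L_F\alpha_{k_j}>-1$, so the exact recursion with the upper bound on the inner product and the lower bound on $\mathbb{E}[\sigma_{x^{k_j}}^2]$ gives, for $j$ large enough that $2C\sigma\sqrt{r_{k_j}}\le\sigma^2/2$,
\[
r_{k_j+1}\ge(1-2L_F\alpha_{k_j})r_{k_j}+\alpha_{k_j}^2\big(\sigma^2-2C\sigma\sqrt{r_{k_j}}\big)\ge -r_{k_j}+\tfrac{\varepsilon^2\sigma^2}{2},
\]
whence $\liminf_j r_{k_j+1}\ge\tfrac{\varepsilon^2\sigma^2}{2}>0$, contradicting $r_k\to0$. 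To see $\sum_k\alpha_k=\infty$: if $\sum_k\alpha_k<\infty$ then $\alpha_k\to0$, so for $k\ge K_0$ large both $1-2L_F\alpha_k\in(0,1)$ and $\sigma^2-2C\sigma\sqrt{r_k}\ge\sigma^2/2$, giving $r_{k+1}\ge(1-2L_F\alpha_k)r_k+\tfrac{\sigma^2}{2}\alpha_k^2$; unrolling and dropping nonnegative leading terms, $r_{k+1}\ge\tfrac{\sigma^2}{2}\sum_{j=K_0}^k\big(\prod_{i=j+1}^k(1-2L_F\alpha_i)\big)\alpha_j^2\ge\tfrac{\sigma^2}{2}P\sum_{j=K_0}^k\alpha_j^2$, where $P:=\prod_{i>K_0}(1-2L_F\alpha_i)>0$ because $\sum_i\alpha_i<\infty$; letting $k\to\infty$ gives $\liminf_k r_k\ge\tfrac{\sigma^2}{2}P\sum_{j\ge K_0}\alpha_j^2>0$ (all $\alpha_j>0$), again a contradiction. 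Hence both conditions in \eqref{con} hold.

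The hard part is the ``$\Rightarrow$'' direction, for two reasons: one needs a second-moment lower bound for $G$ that does not vanish as $x^k\to x(\lambda)$, which is exactly where $\sigma>0$ is indispensable (and where the Lipschitz estimate for $x\mapsto\sqrt{\sigma_x^2}$ enters); and the lower-bound recursion $r_{k+1}\ge(1-2L_F\alpha_k)r_k+\cdots$ is not contractive for large $\alpha_k$, which is circumvented by passing to a subsequence for $\alpha_k\to0$ and by the convergence of the infinite product $\prod(1-2L_F\alpha_i)$ under $\sum\alpha_i<\infty$ for $\sum\alpha_k=\infty$. The ``$\Leftarrow$'' direction is then routine once the contraction recursion is in hand.
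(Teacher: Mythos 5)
Your proposal is correct, and while the sufficiency part follows essentially the paper's own route (the same one-step contraction $\mathbb{E}\Vert r^{k+1}\Vert^2\le(1-\tfrac{\mu\alpha_k}{n})\Vert r^k\Vert^2+18\sigma^2\alpha_k^2$ obtained from the expected-smoothness bound on $\mathbb{E}\Vert G(x)\Vert^2$, followed by a standard limit argument — yours via the absorbing-interval lemma, the paper's via an explicit three-piece splitting of the unrolled sum), your necessity argument is genuinely different. The paper first deduces $\alpha_k^2\mathbb{E}\Vert G(x^k)\Vert^2\to0$ from the lower-bound recursion, then proves $\mathbb{E}\Vert G(x^k)\Vert^2\to\sigma^2$ by a Lipschitz estimate on the second moments to force $\alpha_k\to0$, and finally obtains $\sum_k\alpha_k=\infty$ by dropping the variance term and showing the initial error cannot contract to zero when $\sum_k\alpha_k<\infty$ — this is where the hypothesis $x^1\neq x(\lambda)$ is used. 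You instead derive the quantitative lower bound $\mathbb{E}[\sigma_{x^k}^2]\ge\sigma^2-2C\sigma\sqrt{r_k}$ from the $C$-Lipschitz continuity of $x\mapsto(\tfrac{1}{\sqrt{1-p}}\nabla f(x),\tfrac{\lambda}{\sqrt{p}}\nabla\psi(x))$ plus Jensen, and use it twice: along a subsequence with $\alpha_{k_j}\ge\varepsilon$ to force $\alpha_k\to0$, and through the accumulated variance $\tfrac{\sigma^2}{2}\prod_{i>K_0}(1-2L_F\alpha_i)\sum_j\alpha_j^2>0$ (the infinite product being positive precisely because $\sum_i\alpha_i<\infty$) to force $\sum_k\alpha_k=\infty$. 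What this buys: your necessity proof never uses $x^1\neq x(\lambda)$ (only $\sigma>0$ and $\alpha_k>0$), it sidesteps the paper's reliance on the positivity of the finite product $\prod_{i\le k_0}(1-2L_F\alpha_i)\Vert r^1\Vert^2$, and the lower bound on the conditional second moment is elementary and local; the paper's route, in exchange, yields the cleaner qualitative statement $\mathbb{E}\Vert G(x^k)\Vert^2\to\sigma^2$. Two small points worth making explicit in a final write-up: the inequality $\sigma_x^2\ge\sigma^2-2C\sigma\Vert x-x(\lambda)\Vert$ also holds trivially when $C\Vert x-x(\lambda)\Vert>\sigma$ (the right-hand side is then negative), and the bound $1-2L_F\alpha_k\ge-1$ you invoke does require the check $\mathcal{L}\ge L_F/2$, which follows since $\tfrac{(1+2p)L}{1-p}\ge L$ and $\tfrac{(3-2p)\lambda}{p}\ge\lambda$, so the maximum defining $n\mathcal{L}$ dominates $\tfrac{L+\lambda}{2}$.
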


The conditions $\sigma>0$ and $x^1\not=x(\lambda)$ are needed only for the necessity in the theorem, as seen from the proof and some observations in the literature of online learning and Kaczmarz algorithms~\cite{ying2008online, lin2015learning}. In addition, the necessity holds only under the convex setting. 
Theorem \ref{theorem_1} tells us that for convergence, the step size sequence should converge to $0$, but it should not converge too fast. In particular, when we consider the decaying step size $\alpha_k=\alpha_1 k^{-\theta}$ with $\alpha_1, \theta>0$, we require $0<\theta\le1$. The following theorem provides rates of convergence for such step size sequences.

\begin{theorem}
\label{theorem_2}
Under Assumption~\ref{assumption_1}, when $\alpha_k=\alpha_1 k^{-\theta}$ with $0<\alpha_1\le\frac{1}{2\mathcal{L}}$ and $0<\theta \leq 1$, we have 
\begin{equation}\label{ratesofconvergence}
\mathbb{E}[\Vert x^{k+1}-x(\lambda)\Vert^2] \le C_{\mu, \alpha_1, \theta, \Vert r^{1}\Vert, n} 
\left\{\begin{array}{ll} 
k^{-\theta}, & \hbox{if}  \ 0< \theta <1, \\
k^{-\mu \alpha_1/n}, & \hbox{if}  \ \theta =1, \mu \alpha_1 <n, \\
\frac{1 + \log k}{k}, & \hbox{if}  \ \theta =1, \mu \alpha_1 =n, \\
\frac{1}{k}, & \hbox{if}  \ \theta =1, \mu \alpha_1 > n, 
\end{array}\right.
\end{equation} 
where $C_{\mu, \alpha_1, \theta, \Vert r^{1}\Vert, n}$ is a constant independent of $k$ given explicitly in the proof. 
\end{theorem}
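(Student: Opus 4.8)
The plan is to reduce Theorem~\ref{theorem_2} to a scalar recursion and then resolve that recursion by the two standard devices: an induction for $0<\theta<1$ and a direct unrolling for $\theta=1$.

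\emph{Step 1 (one-step recursion).} Put $r^k:=x^k-x(\lambda)$ and $a_k:=\mathbb{E}[\Vert r^k\Vert^2]$. Repeating the computation that yields \eqref{convegence_rate} in \cite{hanzely2020federated} (and in the proof of Theorem~\ref{theorem_1}) --- expand $\Vert x^{k+1}-x(\lambda)\Vert^2=\Vert r^k\Vert^2-2\alpha_k\langle G(x^k),r^k\rangle+\alpha_k^2\Vert G(x^k)\Vert^2$, take the conditional expectation, use unbiasedness $\mathbb{E}[G(x^k)\mid x^k]=\nabla F(x^k)$ with $\nabla F(x(\lambda))=0$, the expected-smoothness inequality for $G$, the $\frac{\mu}{n}$-strong convexity of $F$ on $\mathbb{R}^{nd}$, and the step-size bound $\alpha_k\le\frac{1}{2\mathcal{L}}$ (which makes the cross terms collapse) --- one obtains
\begin{equation}\label{scalarrec}
a_{k+1}\le\Big(1-\frac{\mu\alpha_k}{n}\Big)a_k+B\alpha_k^2,
\end{equation}
where $B$ is the same $O(\sigma^2)$ constant that appears in \eqref{convegence_rate} (its exact value is irrelevant to the rate). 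Since $\mathcal{L}\ge\frac{L}{n}\ge\frac{\mu}{n}$ and $\alpha_k\le\alpha_1\le\frac{1}{2\mathcal{L}}$, the factor $1-\frac{\mu\alpha_k}{n}$ lies in $[\tfrac12,1)$; in particular, with $\theta=1$ the constant $c:=\frac{\mu\alpha_1}{n}$ satisfies $c<\tfrac12<1$, so among the $\theta=1$ sub-cases of \eqref{ratesofconvergence} only $\mu\alpha_1<n$ is actually attained --- the other two are recorded for completeness and are covered by the same lemma below.

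\emph{Step 2 (analytic lemma).} Inserting $\alpha_k=\alpha_1k^{-\theta}$ into \eqref{scalarrec} gives $a_{k+1}\le(1-ck^{-\theta})a_k+Dk^{-2\theta}$ with $c=\frac{\mu\alpha_1}{n}\in(0,1)$ and $D=B\alpha_1^2$. It thus suffices to prove the following: if $\{a_k\}_{k\ge1}$ is nonnegative and $a_{k+1}\le(1-ck^{-\theta})a_k+Dk^{-2\theta}$ for all $k\ge1$, with $0<\theta\le1$, $0<c<1$, $D>0$, then $a_k$ obeys the bounds in \eqref{ratesofconvergence} with a constant depending only on $c,D,\theta,a_1$. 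The exponent $\mu\alpha_1/n$ and the threshold $\mu\alpha_1=n$ in \eqref{ratesofconvergence} are then exactly $c$ and $c=1$.

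\emph{Step 3 (proof of the lemma).} For $0<\theta<1$ I would show by induction that $a_k\le Ck^{-\theta}$, where $C:=\max\{2D/c,\ \max_{1\le k\le k_0}a_kk^{\theta}\}$ and $k_0:=\lceil(2\theta/c)^{1/(1-\theta)}\rceil$. The inductive step uses
\[
a_{k+1}\le Ck^{-\theta}\Big(1-\big(c-\tfrac{D}{C}\big)k^{-\theta}\Big)\le Ck^{-\theta}\Big(1-\tfrac{c}{2}k^{-\theta}\Big),
\]
together with $(1+1/k)^{-\theta}\ge 1-\theta/k$, so the step closes once $\theta k^{\theta-1}\le c/2$, i.e.\ for all $k\ge k_0$; the finitely many indices $k\le k_0$ are absorbed into $C$ via the crude bound $a_k\le a_1+D\sum_{j<k}j^{-2\theta}$. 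For $\theta=1$ I would unroll the recursion,
\[
a_{k+1}\le\Big(\prod_{i=1}^{k}\big(1-\tfrac{c}{i}\big)\Big)a_1+D\sum_{j=1}^{k}\Big(\prod_{i=j+1}^{k}\big(1-\tfrac{c}{i}\big)\Big)\frac{1}{j^2},
\]
bound every product by $\prod_{i=a}^{b}(1-c/i)\le\exp\big(-c\sum_{i=a}^{b}\tfrac{1}{i}\big)\le\big(\tfrac{a}{b+1}\big)^{c}$, and estimate $\sum_{j=1}^{k}(j+1)^{c}j^{-2}$, which is $O(1)$ when $c<1$, $O(\log k)$ when $c=1$, and $O(k^{c-1})$ when $c>1$; multiplying by the leading factor $(k+1)^{-c}$ yields the rates $k^{-c}$, $\frac{1+\log k}{k}$, $\frac{1}{k}$ respectively.

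\emph{Main obstacle.} The genuinely delicate point is \eqref{scalarrec}: carrying the sharp Hanzely--Richt\'arik estimate through with a \emph{varying} step size so that the contraction factor is exactly $1-\mu\alpha_k/n$ and the noise is exactly $O(\alpha_k^2)$ at every iteration under the single constraint $\alpha_k\le\frac{1}{2\mathcal{L}}$, and checking that the constant $\sigma^2$ --- built from the gradients at the unique minimizer --- is the correct one. Given \eqref{scalarrec}, the analytic lemma is routine, although the $0<\theta<1$ induction requires some care in the choice of $k_0$ and $C$, and the borderline $\theta=1,\ c=1$ must be treated separately to produce exactly the $\frac{1+\log k}{k}$ rate rather than a power loss.
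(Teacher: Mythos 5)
Your proposal is correct, and its skeleton coincides with the paper's: the one-step bound you call \eqref{scalarrec} is exactly the inequality $\mathbb{E}_{x_k}[\Vert r^{k+1}\Vert^2]\le(1-\alpha_k\mu/n)\Vert r^k\Vert^2+18\sigma^2\alpha_k^2$ that the paper extracts in the sufficiency part of Theorem~\ref{theorem_1} (unbiasedness, Lemma~\ref{lemma_3}, the $\frac{\mu}{n}$-strong convexity of $F$ from Lemma~\ref{lemma_2}, and $\alpha_k\le\frac{1}{2\mathcal{L}}$ to kill the cross term), and your $\theta=1$ treatment (unroll, bound each product by $\bigl(\frac{i+1}{k+1}\bigr)^{c}$, estimate $\sum_i (i+1)^{c}i^{-2}$ in the three regimes) is essentially identical to the paper's. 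The genuine difference is in the case $0<\theta<1$: the paper unrolls the recursion completely and estimates the weighted sum by splitting it at $i=k/2$ into $I_1+I_2$, invoking the elementary bound of Lemma~\ref{lemma_4}, which yields a fully explicit closed-form constant in terms of $\Vert r^1\Vert$ and $\sigma^2$; you instead prove $a_k\le Ck^{-\theta}$ by a Chung-type induction, which is shorter and avoids both the splitting and Lemma~\ref{lemma_4}, at the price of a constant defined through $\max_{k\le k_0}a_kk^{\theta}$ with $k_0=\lceil(2\theta/c)^{1/(1-\theta)}\rceil$ (still explicit via your crude bound $a_k\le a_1+D\sum_{j<k}j^{-2\theta}$, but less clean, and note $k_0$ blows up as $\theta\to1$, so the induction cannot absorb the borderline case the way the paper's unified unrolling does). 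Your side remark that $\mathcal{L}\ge L/n\ge\mu/n$ forces $\mu\alpha_1/n\le\frac12$, so the $\theta=1$, $\mu\alpha_1\ge n$ branches are vacuous under the stated step-size constraint, is correct and is not observed in the paper; since your lemma covers those branches anyway, nothing is lost.
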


\section{Experiments}
\label{exp_results}

\begin{figure}[t]
    \centering
    \subfigure[Test accuracy on MNIST IID.]{
        \begin{minipage}[t]{0.475\linewidth}
            \includegraphics[width=1.00\linewidth]{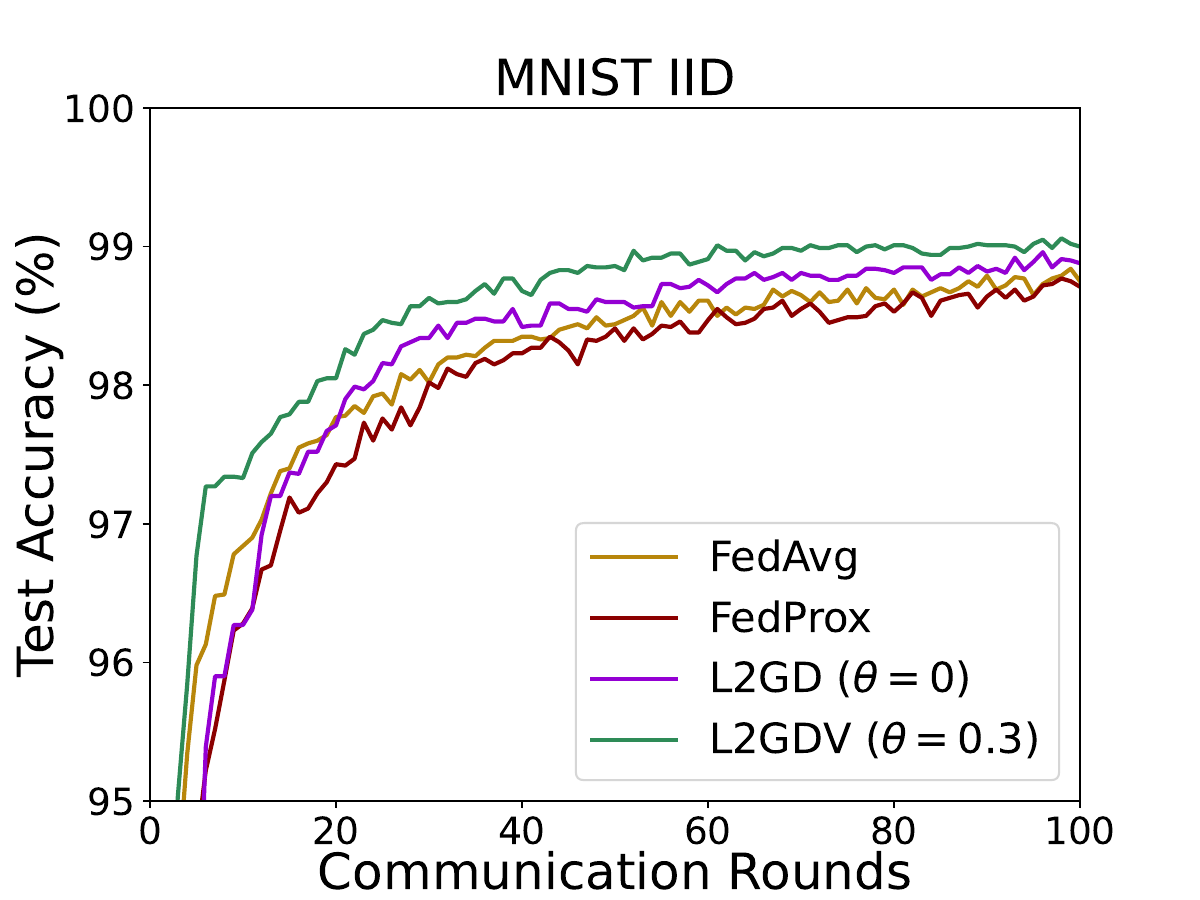}
        \label{subfig: ML-convergence}
        \end{minipage}
    }
    \subfigure[Train loss on MNIST IID.]{
        \begin{minipage}[t]{0.475\linewidth}
            \includegraphics[width=1.00\linewidth]{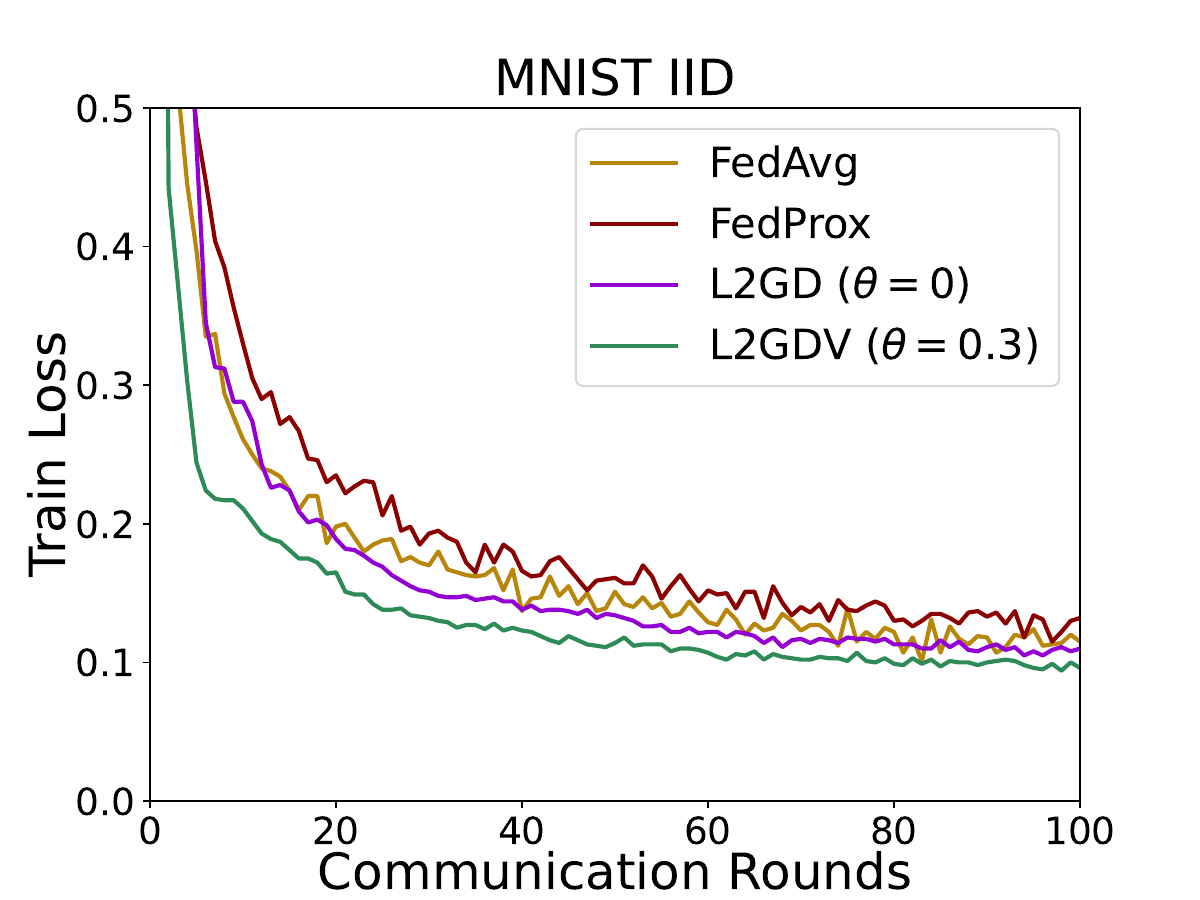}
        \label{subfig: ML-communication}
        \end{minipage}
    }
    \subfigure[Test accuracy on MNIST Non-IID.]{
        \begin{minipage}[t]{0.475\linewidth}
            \includegraphics[width=1.00\linewidth]{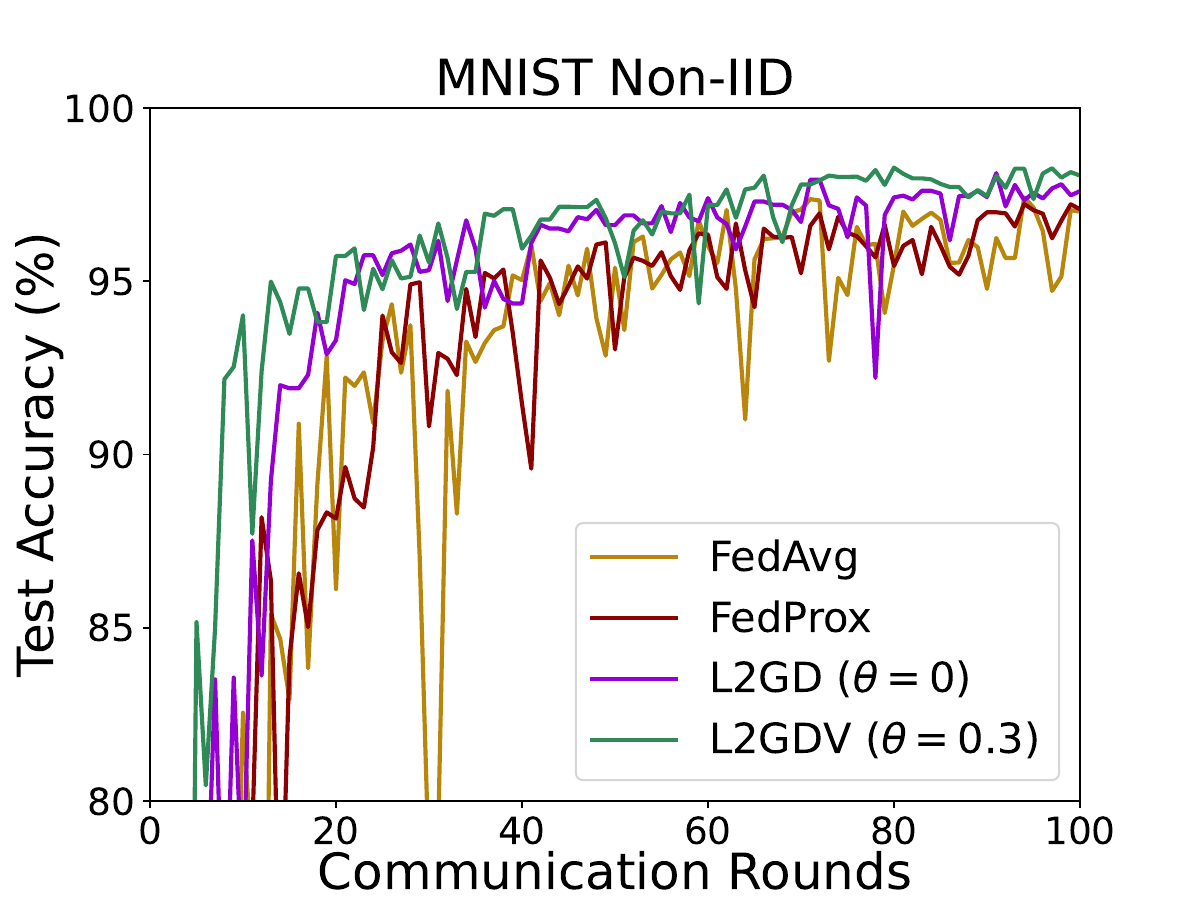}
        \label{subfig: Jester-convergence}
        \end{minipage}
    }
    \subfigure[Train loss on MNIST Non-IID.]{
        \begin{minipage}[t]{0.475\linewidth}
            \includegraphics[width=1.00\linewidth]{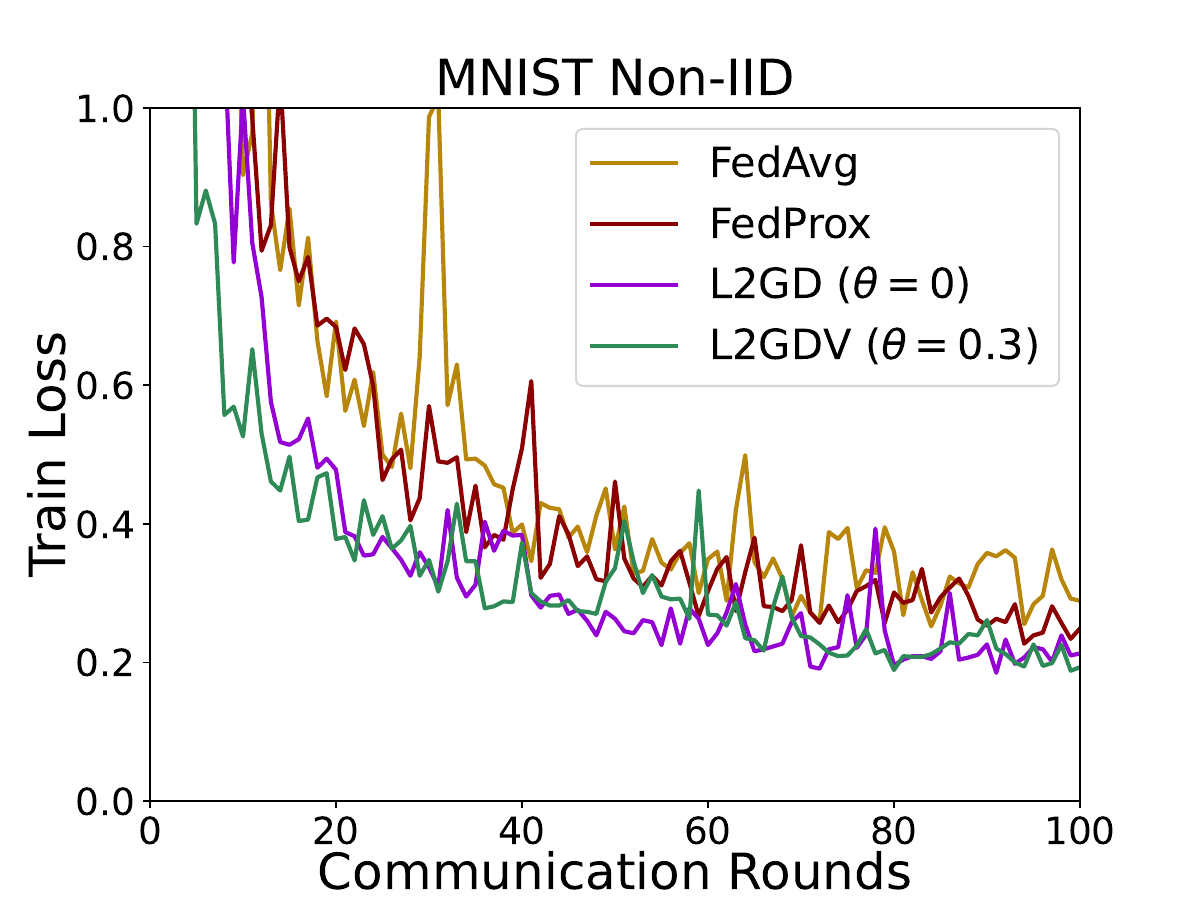}
        \label{subfig: Jester-communication}
        \end{minipage}
    }
    \caption{Test set accuracy and training set loss w.r.t. communication rounds in a non-convex situation (i.e., using the CNN model). Figures (a) and (b) show the results for MNIST IID, while Figures (c) and (d) show those for MNIST Non-IID.}
    \label{fig:nonconvex}
\end{figure}

\begin{figure}[t]
    \centering
    \subfigure[Test accuracy on MNIST IID.]{
        \begin{minipage}[t]{0.475\linewidth}
            \includegraphics[width=1.00\linewidth]{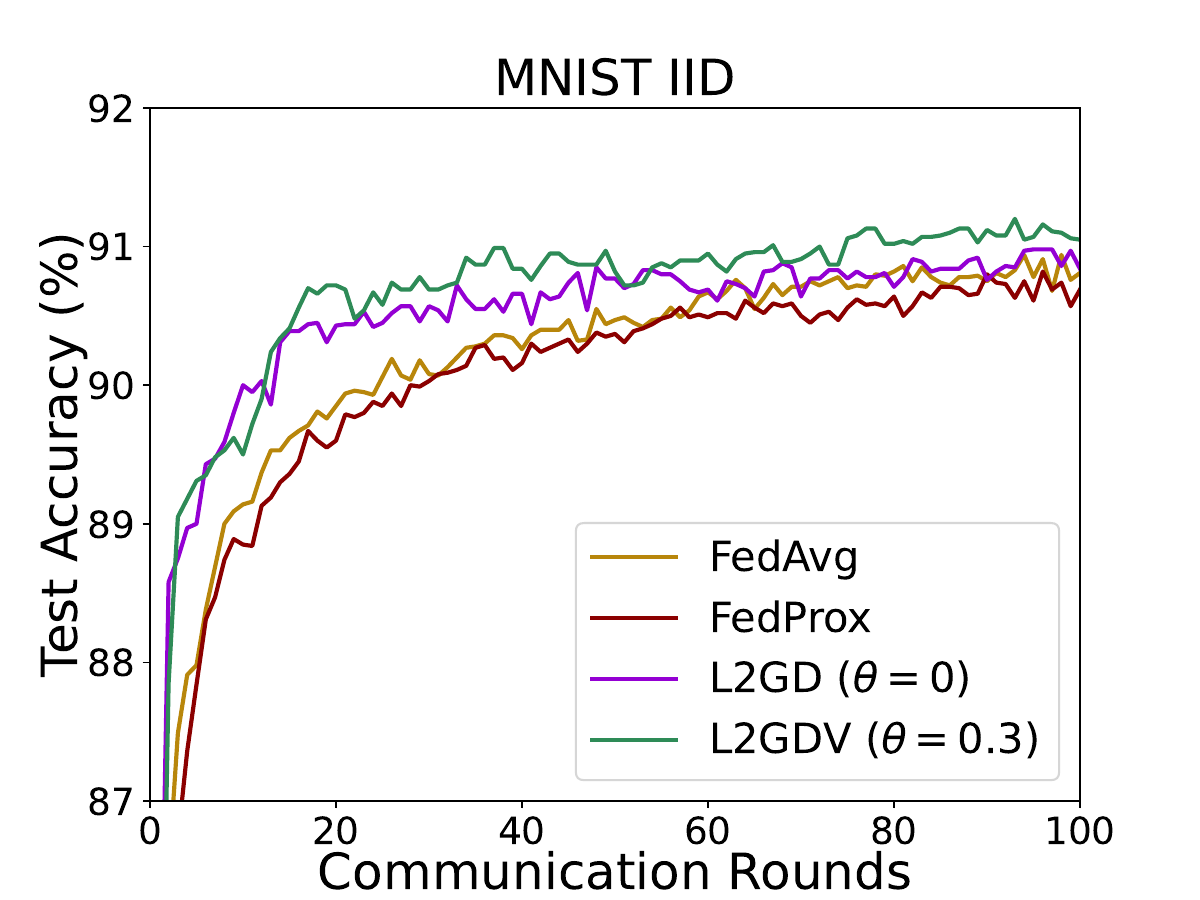}
        \label{subfig: ML-convergence}
        \end{minipage}
    }
    \subfigure[Train loss on MNIST IID.]{
        \begin{minipage}[t]{0.475\linewidth}
            \includegraphics[width=1.00\linewidth]{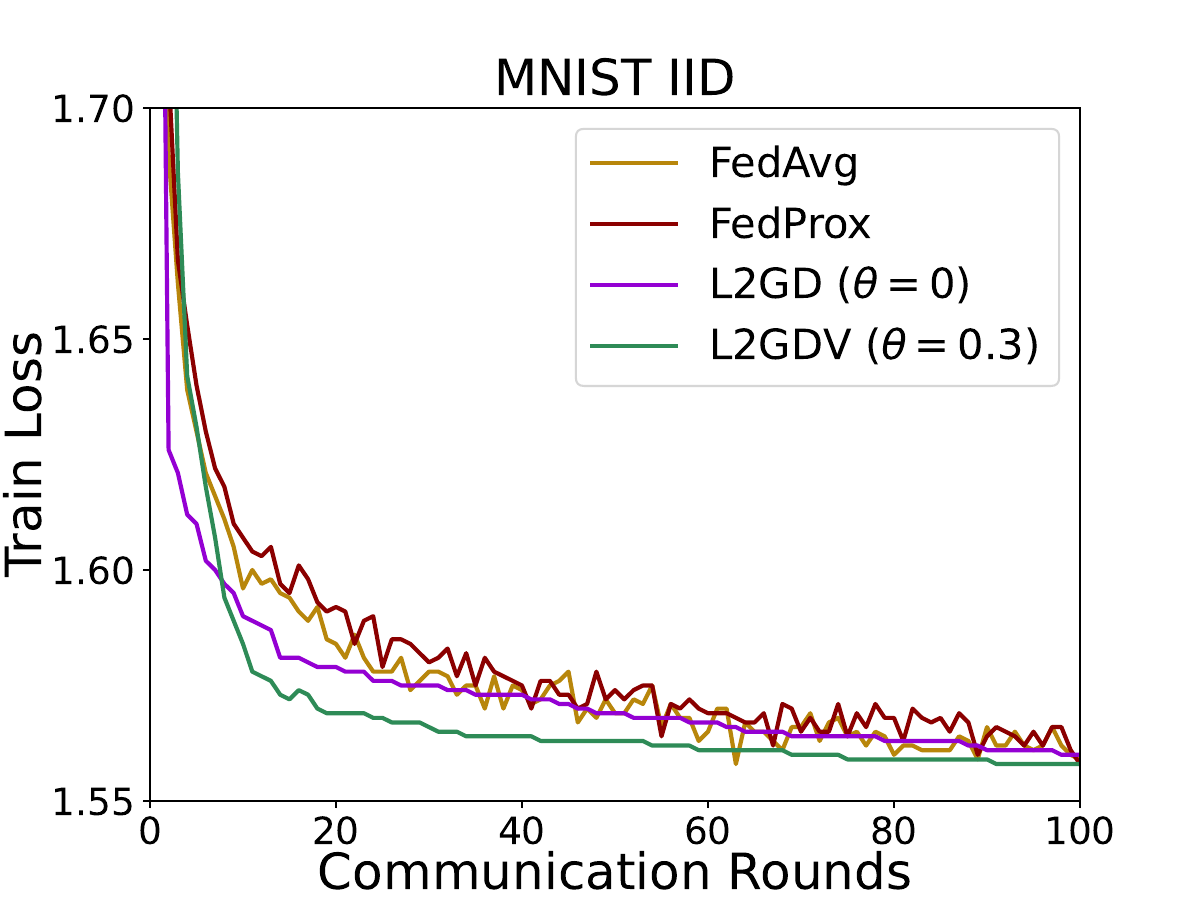}
        \label{subfig: ML-communication}
        \end{minipage}
    }
    \subfigure[Test accuracy on MNIST Non-IID.]{
        \begin{minipage}[t]{0.475\linewidth}
            \includegraphics[width=1.00\linewidth]{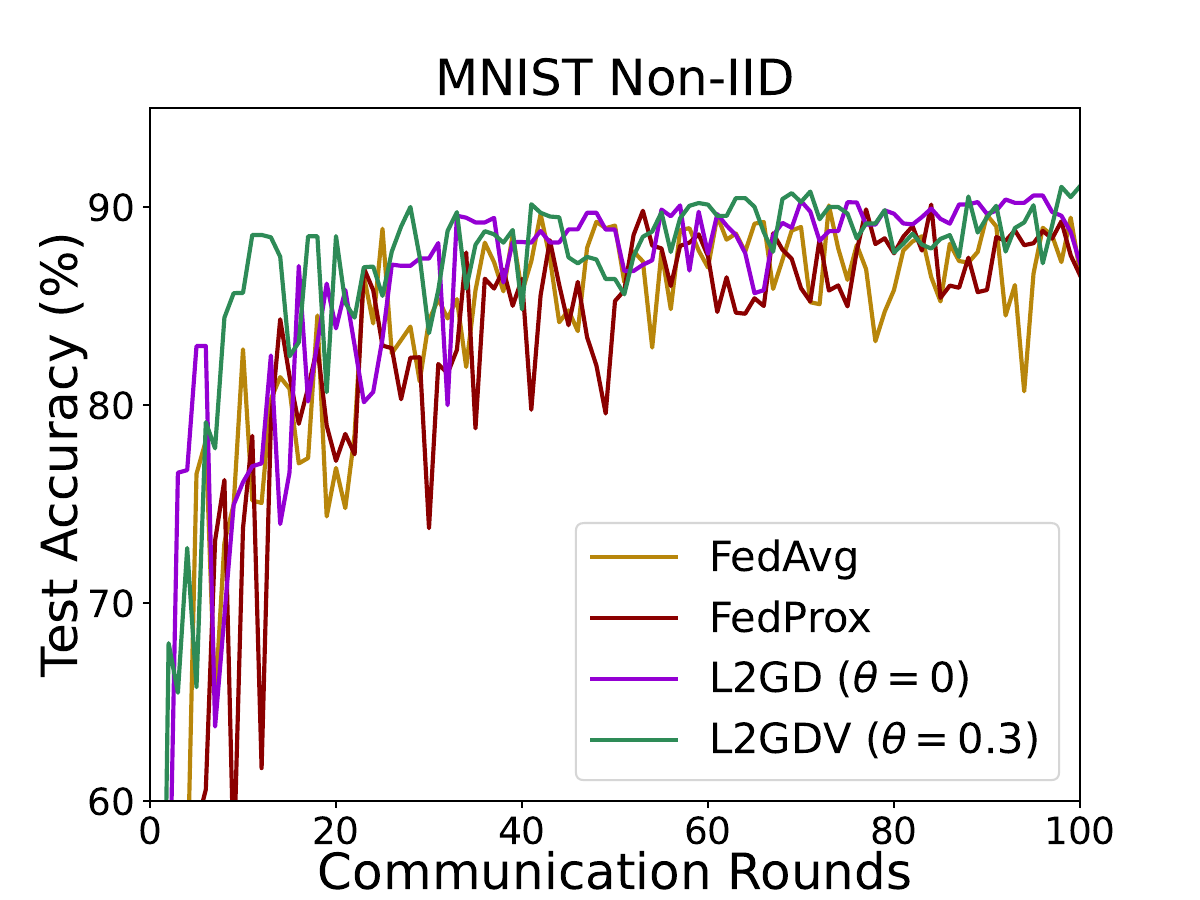}
        \label{subfig: Jester-convergence}
        \end{minipage}
    }
    \subfigure[Train loss on MNIST Non-IID.]{
        \begin{minipage}[t]{0.475\linewidth}
            \includegraphics[width=1.00\linewidth]{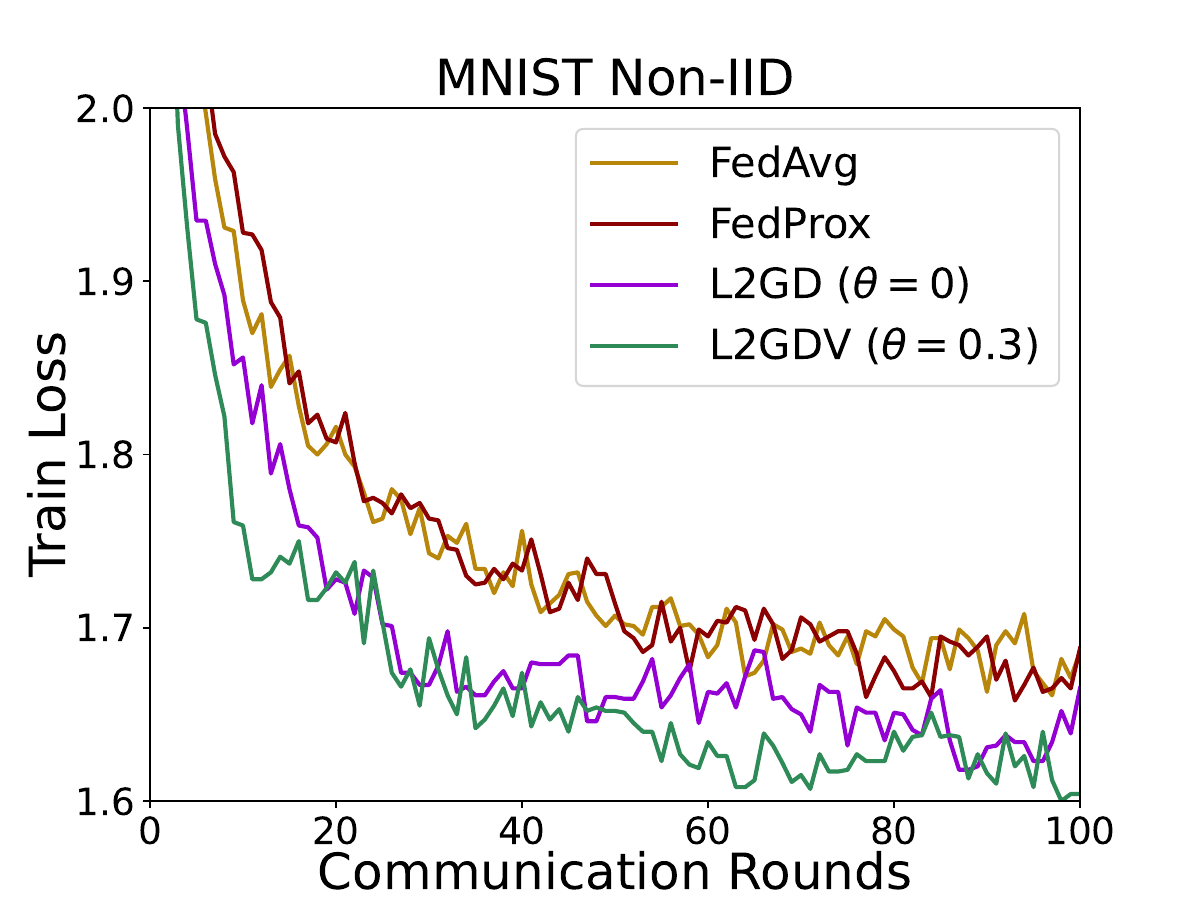}
        \label{subfig: Jester-communication}
        \end{minipage}
    }
    \caption{Test set accuracy and training set loss w.r.t. communication rounds in a convex situation (i.e., using the LR model). Figures (a) and (b) show the results for MNIST IID, while Figures (c) and (d) show those for MNIST Non-IID.}
    \label{fig:convex}
\end{figure}

In this section, we conduct comprehensive experiments to demonstrate the effectiveness of the L2GDV and verify the obtained theoretical results. 

\subsection{Experimental details}
We conduct the experiments on a well-known real dataset, MNIST~\cite{lecun1998gradient}, frequently used in prior works about federated learning. Specifically, MNIST is a dataset for classification problems, formed as hand-written digits (i.e., $0$-$9$) with $28\times28$ pixels. In addition, MNIST consists of a training set of size $60,000$ and a test set of size $10,000$. 

We utilize convolutional neural networks (CNNs) and multinomial logistic regression (LR) to study the non-convex and convex situations, respectively. 
Furthermore, we consider two partition approaches, IID and Non-IID, of the MNIST dataset, and the partition details are specified in Section 3 of~\cite{mcmahan2016federated}. We denote them as MNIST IID and MNIST Non-IID for simplicity. 
We compare the proposed L2GDV with three strong baselines, FedAvg~\cite{mcmahan2016federated}, FedProx~\cite{li2020federated1} and L2GD~\cite{hanzely2020federated}. For each baseline, we use the default optimal hyperparameters for a fair performance comparison. In particular, the number of clients is set as $100$, which means each client has $600$ samples.
Conform to the theoretical results, we take a polynomially decaying step size sequence for our proposed L2GDV, formed as $\alpha_k=\alpha_1 k^{-\theta}$, and we tune the parameter as $\theta=0.3$.  

\subsection{Experimental results}
We compare the performances by reporting the changes in the test accuracy and train loss as communication rounds increase for each situation (non-convex and convex situations) and for each partition approach (IID and Non-IID). 

\subsubsection{Non-convex situation} We report the results in a non-convex situation (i.e., using the CNN model) in Figure~\ref{fig:nonconvex}. The proposed L2GDV outperforms all the baselines in the test accuracy and train loss. Specifically, L2GDV achieved a significant lead in MNIST IID and MNIST Non-IID regarding model performances (i.e., test accuracies). In addition, the results of train losses show that L2GDV possesses faster convergence, which means it achieves its optimum within fewer communication rounds. The simulation results in the non-convex situation support the theoretical results in Section~\ref{sec:nonconvex}. In the non-convex situation, leveraging a decaying step size sequence guarantees convergence to the optimal function value with fast convergence.  

\subsubsection{Convex situation} The experimental results in a convex situation are demonstrated in Figure~\ref{fig:convex}, similar to those of the non-convex situation. Although the lead is not that large compared to that in a non-convex situation, one can easily observe that L2GDV converges faster, especially in MNIST IID. All methods have a fluctuating and unstable performance in MNIST Non-IID. However, L2GDV can provide a relatively stable training process compared to baselines, leading to better accuracy. The results verify the theorems in Section~\ref{sec:convex} that a decaying step size sequence of L2GDV is sufficient to guarantee convergence to an optimal solution in a convex situation.

\section{Conclusion}
\label{conclusion section}

In this paper, we have studied a federated learning algorithm L2GDV which is an SGD method  for solving a regularized empirical risk minimization problem involving a global error function associated with individual aggregate losses and a regularization term. A regularization parameter $\lambda$ and a probability level parameter $p$ for the stochastic estimate of the gradient can control the level of communications between the central and local machines. 
We improve the existing method by allowing flexible step sizes and carry out novel analysis for the convergence of the algorithm. Our analysis consists of two parts corresponding to a non-convex setting and a standard strongly convex setting. In the non-convex setting, we have assumed a PL condition and derived rates of convergence. In the strongly convex setting, we presented a sufficient and necessary condition for the convergence in expectation, which demonstrates that a changing step size sequence is needed for the convergence.

\section*{Acknowledgments}
The work described in this paper was partially supported by InnoHK initiative, The Government of the HKSAR, and Laboratory for AI-Powered Financial Technologies. The authors would like to thank the referees for their constructive comments and valuable suggestions. 


\newpage
\appendix
\section{Proof of Results in the Non-convex Situation}
\label{proof_nonconvex}
Before carrying out detailed convergence analysis, we present some analysis for the step iteration, illustrating how the error is reduced from $F(x^{k})-F^*$ to $\mathbb{E}_{x_k}[F(x^{k+1})-F^*]$. 

By the definition (\ref{L2GDnew}) of the sequence $\{x^k\}$, we have $x^{k+1}=x^k-\alpha_k G(x^k)$. Leveraging the $L_F$-smoothness of $F$ to be given in Lemma \ref{lemma_smooth} below, we have 
\begin{align*}
F(x^{k+1})&\le F(x^k) +\langle\nabla F(x^k),x^{k+1}-x^k\rangle + \frac{L_F}{2}\Vert x^{k+1}-x^k\Vert^2\\
&= F(x^k) +\langle\nabla F(x^k),-\alpha_k G(x^k)\rangle + \frac{L_F}{2}\alpha_k^2\Vert G(x^k)\Vert^2. 
\end{align*}
As $G(x)$ is an unbiased estimator of $\nabla F(x)$, taking expectations conditioned on $x^k$ gives 
\begin{equation}\label{onestepiteration_nonconvex}
\mathbb{E}_{x_k}[F(x^{k+1})]\le F(x^k) -\alpha_k\Vert\nabla F(x^k)\Vert^2 + \frac{L_F}{2}\alpha_k^2\mathbb{E}_{x_k}\Vert G(x^k)\Vert^2.
\end{equation}
The middle term can be bounded by applying the definition (\ref{PLdef}) of the PL condition as 
$$
\Vert\nabla F(x^k)\Vert^2\ge 2\mu(F(x^k)-F^*).
$$
Hence  
\begin{equation}\label{PL_est}
\mathbb{E}_{x_k}[F(x^{k+1})-F^*]\le (1-2\alpha_k\mu)[F(x^{k})-F^*] + \frac{L_F}{2}\alpha_k^2\mathbb{E}_{x_k}\Vert G(x^k)\Vert^2.
\end{equation} 
This one step iteration result enables us to derive convergence and rates of convergence from 
$$\Pi_{i=1}^k \left(1- 2\alpha_i \mu\right) \leq \Pi_{i=1}^k \exp\left\{- 2\alpha_i \mu\right\} =  \exp\left\{- 2\left(\sum_{i=1}^k \alpha_i\right) \mu\right\} $$
after the PL condition exponent $\mu$ is found and the minor term $\frac{L_F}{2}\alpha_k^2\mathbb{E}_{x_k}\Vert G(x^k)\Vert^2$ is estimated.

\subsection{General analysis for the non-convex situation}
To conduct detailed analysis with the result (\ref{PL_est}) of the one-step iteration, we need the PL condition exponent $\mu$, the smooth exponent $L_F$, and estimates of the minor term $\alpha_k^2\mathbb{E}_{x_k}\Vert G(x^k)\Vert^2$. 
\begin{lemma}
\label{lemma_smooth}
Under Assumption~\ref{assumption_0}, $f$ is $L_f$-smooth with $L_f=\frac{L}{n}$, $\psi$ is $L_\psi$-smooth with $L_\psi=\frac{1}{n}$, $F$ is $L_F$-smooth with $L_F = \frac{L+\lambda}{n}$.
\end{lemma}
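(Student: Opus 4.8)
The plan is to prove the three smoothness claims in order, obtaining the bound for $F$ by combining those for $f$ and $\psi$. Throughout I work directly from the definition \eqref{smoothdef}, which is the one-sided quadratic upper bound; this means I never need to separately verify Lipschitz continuity of gradients or convexity, only to produce the upper bound with the claimed constant.

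First, for $f$: writing $x=(x_i)_{i=1}^n$ and $y=(y_i)_{i=1}^n$, I apply the $L$-smoothness of each $f_i$ to the pair $(x_i,y_i)\in\mathbb{R}^d\times\mathbb{R}^d$, sum the $n$ resulting inequalities, and divide by $n$. Using that $\nabla f(x)=\frac1n(\nabla f_i(x_i))_{i=1}^n$ and that $\|x-y\|^2=\sum_{i=1}^n\|x_i-y_i\|^2$ on $\mathbb{R}^{nd}$, the right-hand side collapses exactly to $f(y)+\nabla f(y)^\top(x-y)+\frac{L}{2n}\|x-y\|^2$, so $f$ is $(L/n)$-smooth.

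Second, for $\psi$: the key point is that $\psi$ is a homogeneous quadratic form on $\mathbb{R}^{nd}$. Expanding $\|x_i-\bar x\|^2$ and using $\sum_i x_i=n\bar x$ gives $\psi(x)=\frac1{2n}\sum_i\|x_i\|^2-\frac12\|\bar x\|^2=\frac12 x^\top A x$ for a symmetric matrix $A$, and one checks (as already recorded in the text) that $\nabla\psi(x)=\frac1n(x_i-\bar x)_{i=1}^n$. For a homogeneous quadratic one has the exact identity $\psi(x)=\psi(y)+\langle\nabla\psi(y),x-y\rangle+\psi(x-y)$ for all $x,y$, so it suffices to bound $\psi(z)$ from above by $\frac1{2n}\|z\|^2$. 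Writing $\bar z=\frac1n\sum_i z_i$, the variance-type identity $\sum_i\|z_i-\bar z\|^2=\sum_i\|z_i\|^2-n\|\bar z\|^2\le\sum_i\|z_i\|^2=\|z\|^2$ gives precisely $\psi(z)\le\frac1{2n}\|z\|^2$, hence \eqref{smoothdef} holds for $\psi$ with constant $1/n$.

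Finally, for $F=f+\lambda\psi$: multiplying the quadratic upper bound for $\psi$ by $\lambda\ge0$ shows $\lambda\psi$ is $(\lambda/n)$-smooth, and adding the quadratic upper bounds for $f$ and $\lambda\psi$ yields \eqref{smoothdef} for $F$ with constant $\frac{L}{n}+\frac{\lambda}{n}=\frac{L+\lambda}{n}=L_F$. Every step is elementary; the only place needing mild care is the $\psi$ computation — correctly verifying the stated gradient formula and using the identity $\sum_i\|z_i-\bar z\|^2=\sum_i\|z_i\|^2-n\|\bar z\|^2$ to obtain the sharp constant $1/n$ rather than a looser one — so I expect no genuine obstacle beyond this bookkeeping.
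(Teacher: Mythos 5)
Your proof is correct, and it takes a genuinely different route from the paper. The paper argues via second derivatives: it writes $\nabla^2 f(x)=\frac{1}{n}\mathrm{diag}(\nabla^2 f_i(x_i))$ and bounds it by $\frac{L}{n}I_{nd}$, and identifies $\nabla^2\psi(x)=\frac{1}{n}(I_n-\frac{1}{n}ee^T)$ as a projection-type matrix with eigenvalues in $\{0,1\}$, so that $\nabla^2\psi \preceq \frac{1}{n}I$; adding gives $L_F=\frac{L+\lambda}{n}$. You instead work directly from the defining inequality \eqref{smoothdef}: for $f$ you sum the $n$ componentwise quadratic upper bounds and divide by $n$, and for $\psi$ you exploit that it is a homogeneous quadratic, so the exact identity $\psi(x)=\psi(y)+\langle\nabla\psi(y),x-y\rangle+\psi(x-y)$ together with the variance identity $\sum_i\|z_i-\bar z\|^2=\sum_i\|z_i\|^2-n\|\bar z\|^2\le\|z\|^2$ yields the sharp constant $\frac{1}{n}$. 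What your approach buys is that it needs no twice differentiability: Assumption~\ref{assumption_0} only imposes the inequality \eqref{smoothdef} on each $f_i$, so the paper's Hessian computation is strictly speaking an extra (unstated) regularity assumption, whereas your argument works verbatim under the stated hypothesis and delivers the same constants. What the paper's approach buys is brevity and a transparent spectral picture of $\psi$ (its Hessian is $\frac{1}{n}$ times a projector), which also immediately gives convexity of $\psi$ — a fact reused in the convex-case analogue, Lemma~\ref{lemma_2} — though for the present lemma your one-sided bound is all that is required.
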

\begin{proof} Observe 
\begin{equation}\label{nablaf}
\nabla f(x)=\frac{1}{n}(\nabla f_1(x_1),\nabla f_2(x_2),\cdots,\nabla f_n(x_n))^T, 
\end{equation} 
and 
$$ \nabla^2 f(x)=\frac{1}{n}\text{diag}(\nabla^2 f_1(x_1),\nabla^2 f_2(x_2),\cdots,\nabla^2 f_n(x_n)). $$
Then by Assumption~\ref{assumption_0}, we have 
$$ \nabla^2 f_i(x_i)\preceq L I_d,$$
where $I_d$ as the $d\times d$ identity matrix, which implies 
$$ \nabla^2 f(x)\preceq \frac{L}{n} I_{nd}$$
and thereby $f$ is $\frac{L}{n}$-smooth. Notice for the function $\psi$ that 
\begin{equation}\label{nablapsi}
\nabla\psi(x)=\frac{1}{n}(x_1-\bar{x}, x_2-\bar{x},\cdots,x_n-\bar{x})^\mathrm {T}.
\end{equation} 
If we denote $e$ as the all one vector in $\mathbb{R}^{n}$, then we have 
\[\nabla^2\psi(x)=\frac{1}{n}(I_n-\frac{1}{n}ee^T),\]
where $I_n-\frac{1}{n}ee^T$ is a circulant matrix with each eigenvalue either zero or one. Hence the largest eigenvalue of $\nabla^2\psi(x)$ is $\frac{1}{n}$ and 
$\nabla^2\psi(x)\preceq\frac{1}{n} I_n$, telling us that $\psi$ is $\frac{1}{n}$-smooth. Thus, $F$ is $L_F$-smooth with $L_F = \frac{L+\lambda}{n}$.
\end{proof}

\begin{lemma}
\label{lemma_QG}
Let $x_p$ be the projection of $x$ onto the solution set $\mathcal{X}^*$. If $g$ satisfies a PL condition with some $\mu>0$, then we have 
$$g(x)-g^*\ge\frac{\mu}{2}\Vert x-x_p\Vert^2,  \qquad \forall x\in \mathbb{R}^d.$$
\end{lemma}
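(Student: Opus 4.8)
The plan is to derive this quadratic growth bound from the PL inequality by following the negative gradient flow of $g$. If $g(x) = g^*$ there is nothing to prove, since then $x\in\mathcal{X}^*$, so $x_p = x$ and both sides vanish; hence assume $g(x) > g^*$. Consider the curve $y(\cdot)$ solving $\dot y(t) = -\nabla g(y(t))$ with $y(0) = x$ (in the situation where the lemma is applied, $g = F$ is $L_F$-smooth by Lemma~\ref{lemma_smooth}, so $\nabla g$ is Lipschitz and the solution exists and is unique on $[0,\infty)$). Along this curve $\frac{d}{dt}\bigl(g(y(t)) - g^*\bigr) = -\Vert\nabla g(y(t))\Vert^2 \le -2\mu\bigl(g(y(t)) - g^*\bigr)$ by the PL condition~\eqref{PLdef}, hence $g(y(t)) - g^* \le e^{-2\mu t}\bigl(g(x) - g^*\bigr)$, which tends to $0$ as $t\to\infty$.

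Next I would control the length of the curve. Set $\phi(t) := \sqrt{g(y(t)) - g^*}$, which is positive and differentiable while $g(y(t)) > g^*$, with $\dot\phi(t) = -\frac{\Vert\nabla g(y(t))\Vert^2}{2\phi(t)}$. Using the PL condition in the equivalent form $\Vert\nabla g(y(t))\Vert \ge \sqrt{2\mu}\,\phi(t)$, one gets $\dot\phi(t) \le -\frac{\sqrt{2\mu}}{2}\Vert\nabla g(y(t))\Vert$, and integrating from $0$ to $T$ gives $\int_0^T \Vert\nabla g(y(t))\Vert\,dt \le \frac{2}{\sqrt{2\mu}}\bigl(\phi(0) - \phi(T)\bigr) \le \sqrt{\tfrac{2}{\mu}}\sqrt{g(x) - g^*}$. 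Since $\Vert\dot y(t)\Vert = \Vert\nabla g(y(t))\Vert$, the curve has finite total length bounded by $\sqrt{2/\mu}\,\sqrt{g(x) - g^*}$, so $y(t)$ is Cauchy as $t\to\infty$ and converges to a limit $\hat x$; by continuity $g(\hat x) = \lim_{t\to\infty} g(y(t)) = g^*$, i.e. $\hat x\in\mathcal{X}^*$, and $\Vert x - \hat x\Vert \le \int_0^\infty \Vert\dot y(t)\Vert\,dt \le \sqrt{2/\mu}\,\sqrt{g(x) - g^*}$.

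Finally, $\mathcal{X}^*$ is a nonempty closed set (nonempty because $g$ attains its minimum, closed as the preimage of $\{g^*\}$ under the continuous $g$), so the projection $x_p$ is well defined and $\Vert x - x_p\Vert \le \Vert x - \hat x\Vert \le \sqrt{2/\mu}\,\sqrt{g(x) - g^*}$; squaring and rearranging yields $g(x) - g^* \ge \frac{\mu}{2}\Vert x - x_p\Vert^2$, as claimed.

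I expect the main obstacle to be the analytic bookkeeping for the gradient flow — its existence on all of $[0,\infty)$, the absence of blow-up, and the fact that the limit point genuinely lies in $\mathcal{X}^*$ rather than being merely an infimizing limit — but these follow from the smoothness of $g$ available in the application together with the finite-length estimate established above. Alternatively, one could replace the continuous flow by the gradient-descent iteration $z_{j+1} = z_j - \eta\nabla g(z_j)$ with $\eta \le 1/L$, use the PL and descent inequalities to telescope a comparable bound on $\sum_j \Vert z_{j+1} - z_j\Vert$, and pass to the limit; this avoids ODE theory at the cost of a slightly longer argument.
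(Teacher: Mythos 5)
Your argument is correct, and it is worth noting that the paper itself gives no proof of this lemma at all: it is stated as a known fact (the quadratic-growth consequence of the PL condition, as in the cited work of Karimi, Nutini and Schmidt) and used directly in the proof of Lemma~\ref{lemma_0}. The gradient-flow argument you give — exponential decay of $g(y(t))-g^*$, the length estimate obtained by differentiating $\sqrt{g(y(t))-g^*}$ and invoking PL in the form $\Vert\nabla g\Vert\ge\sqrt{2\mu}\,\sqrt{g-g^*}$, convergence of the flow to a point of $\mathcal{X}^*$, and comparison with the projection — is precisely the standard proof of that known fact, so you have in effect supplied the omitted details rather than taken a different route. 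Two small caveats, both of which you already flag and which do not affect the conclusion: the PL condition alone only guarantees that $g$ is differentiable, whereas the flow argument needs $\nabla g$ to be (locally) Lipschitz for existence and uniqueness of $y(\cdot)$; this is harmless here because the lemma is only ever applied to $g=F$, which is $L_F$-smooth with an explicitly computed Hessian (Lemma~\ref{lemma_smooth}), and your discrete gradient-descent alternative covers the same ground under just the descent inequality \eqref{smoothdef}. Also, if the flow reaches $\mathcal{X}^*$ in finite time $T^*$, your length bound on $[0,T^*]$ already yields the estimate with $\hat x=y(T^*)$, so the "while $g(y(t))>g^*$" restriction in the differentiation of $\phi$ causes no loss. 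The constant you obtain, $\Vert x-x_p\Vert^2\le\frac{2}{\mu}\bigl(g(x)-g^*\bigr)$, matches the statement exactly.
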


\begin{lemma}
\label{lemma_0}
Under Assumption~\ref{assumption_0}, for every $x\in \mathbb{R}^d$, the stochastic gradient $G$ of $F$ satisfies 
\[\mathbb{E}[\Vert G(x)-G(x_p)\Vert^2]\le \frac{2\zeta}{\mu}(F(x)-F(x_p))+8\sigma_{m}^2\]
and 
\[\mathbb{E}[\Vert G(x)\Vert^2]\le \frac{4\zeta}{\mu}(F(x)-F(x_p))+18\sigma_{m}^2.\]
\end{lemma}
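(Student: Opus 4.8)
The plan is to reduce both inequalities to a single elementary second‑moment computation for $G$ together with the quadratic‑growth estimate of Lemma~\ref{lemma_QG}. I would first establish the estimate for $\mathbb{E}\|G(x)-G(x_p)\|^2$, coupling the two evaluations through the \emph{same} Bernoulli coin of (\ref{SG}); then the two‑point form of $G$ gives the exact identity
\[
\mathbb{E}\|G(x)-G(x_p)\|^2=\frac{\|\nabla f(x)-\nabla f(x_p)\|^2}{1-p}+\frac{\lambda^2\|\nabla\psi(x)-\nabla\psi(x_p)\|^2}{p}.
\]
Using the block forms (\ref{nablaf}) and (\ref{nablapsi}) and the smoothness facts from Lemma~\ref{lemma_smooth} — namely that $\nabla f$ is $\frac{L}{n}$-Lipschitz (equivalently $L$-smoothness of each $f_i$ on the first block) and that $\nabla\psi$ is $\frac{1}{n}$-Lipschitz (equivalently the averaging identity $\sum_i\|d_i-\bar d\|^2\le\sum_i\|d_i\|^2$ with $d_i=x_i-(x_p)_i$ and $\bar d=\frac1n\sum_j d_j$ on the second block) — I would bound the right‑hand side by $\bigl(\frac{L^2}{n^2(1-p)}+\frac{\lambda^2}{n^2p}\bigr)\|x-x_p\|^2$. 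Since $1\le 1+2p$ and $1\le 3-2p$ for $0<p<1$, this coefficient is at most $\zeta$, so $\mathbb{E}\|G(x)-G(x_p)\|^2\le\zeta\|x-x_p\|^2$.

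Next, applying Lemma~\ref{lemma_QG} to $F$ at the point $x$ (its minimum value is $F^*$ and $x_p$ is the projection of $x$ onto $\mathcal{X}^*$) gives $\|x-x_p\|^2\le\frac{2}{\mu}(F(x)-F^*)=\frac{2}{\mu}(F(x)-F(x_p))$, hence $\mathbb{E}\|G(x)-G(x_p)\|^2\le\frac{2\zeta}{\mu}(F(x)-F(x_p))$; adding the nonnegative quantity $8\sigma_m^2$ yields the first claimed inequality (the extra term is harmless slack, kept so that the second inequality comes out cleanly).

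For the second inequality I would use the elementary bound $\|G(x)\|^2\le 2\|G(x)-G(x_p)\|^2+2\|G(x_p)\|^2$, take expectations, insert the first inequality, and observe that $\mathbb{E}\|G(x_p)\|^2$ equals exactly $\sigma_{x_p}^2$ — again by the two‑point form of $G$ and the block identities (\ref{nablaf}), (\ref{nablapsi}) — which is $\le\sigma_m^2$ because $x_p\in\mathcal{X}^*$ and $\sigma_m^2=\max_{x\in\mathcal{X}^*}\sigma_x^2$. This gives $\mathbb{E}\|G(x)\|^2\le\frac{4\zeta}{\mu}(F(x)-F(x_p))+(16+2)\sigma_m^2$, which is the second inequality.

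The routine‑but‑delicate part is the constant bookkeeping in the step that turns $\|\nabla\psi(x)-\nabla\psi(x_p)\|^2$ into $\|x-x_p\|^2$: the factor picked up there must be exactly $\frac{1}{n^2}$ (this is where the centering identity, equivalently the $\frac{1}{n}$-smoothness of $\psi$ from Lemma~\ref{lemma_smooth}, is essential — a crude triangle‑inequality split would cost a constant and break the embedding of the coefficient into $\zeta$), and similarly the $f$-block must contribute coefficient $\frac{L^2}{n^2(1-p)}$, which is at most the first term of $\zeta$. The only genuinely conceptual point is to realise that one should couple the randomness of $G(x)$ and $G(x_p)$ and then invoke Lemma~\ref{lemma_QG}, rather than try to bound $\mathbb{E}\|G(x)\|^2$ directly; the direct route forces a lossy $\|a+b\|^2\le 2\|a\|^2+2\|b\|^2$ split on $\nabla f_i(x_i)=(\nabla f_i(x_i)-\nabla f_i((x_p)_i))+\nabla f_i((x_p)_i)$ whose leading coefficient $\frac{2L^2}{n^2(1-p)}$ need not be dominated by $\zeta$ when $p<\frac12$ and $\lambda$ is small, so the $2\zeta/\mu$ constant in the first inequality would be lost.
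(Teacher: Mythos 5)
Your proof is correct, but it takes a genuinely different route from the paper's. You couple $G(x)$ and $G(x_p)$ through the \emph{same} Bernoulli coin, which yields the exact two-term identity $\mathbb{E}\Vert G(x)-G(x_p)\Vert^2=\frac{1}{1-p}\Vert\nabla f(x)-\nabla f(x_p)\Vert^2+\frac{\lambda^2}{p}\Vert\nabla\psi(x)-\nabla\psi(x_p)\Vert^2$, and then the Lipschitz bounds from Lemma~\ref{lemma_smooth} plus Lemma~\ref{lemma_QG} give $\zeta\Vert x-x_p\Vert^2\le\frac{2\zeta}{\mu}(F(x)-F^*)$ with no variance term at all; the $8\sigma_m^2$ is pure slack and the second inequality comes out with $2\sigma_m^2$ in place of $18\sigma_m^2$. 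The paper instead evaluates $\mathbb{E}_{x,x_p}\Vert G(x)-G(x_p)\Vert^2$ with \emph{independent} coins at the two points, which forces an expansion containing the cross terms $\Vert I_1-I_4\Vert^2$, $\Vert I_3-I_2\Vert^2$; bounding $-2\langle I_1,I_4\rangle$ and $-2\langle I_2,I_3\rangle$ by sums of squares is exactly what produces the factors $(1+2p)$, $(3-2p)$ inside $\zeta$ and the genuine $8(1-p)p(\Vert I_2\Vert^2+\Vert I_4\Vert^2)\le 8\sigma_m^2$ term. The one caveat is interpretive rather than mathematical: under the paper's independent-coin reading, $\mathbb{E}\Vert G(x)-G(x_p)\Vert^2$ is a different quantity from your coupled one (it equals $\mathbb{E}\Vert G(x)\Vert^2+\sigma_{x_p}^2$ since $\nabla F(x_p)=0$), so strictly speaking your argument proves the first inequality only in its coupled form. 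Since the lemma statement does not fix the joint law and only the bound on $\mathbb{E}\Vert G(x)\Vert^2$ is used downstream (in the proof of Theorem~\ref{theorem_PL_fixed}), this does not affect any result in the paper; your route is shorter and gives sharper constants, while the paper's matches the literal $\mathbb{E}_{x,x_p}$ reading and parallels the convex-case analysis inherited from the L2GD literature. Your closing remark about the direct split of $\nabla f_i(x_i)$ is also accurate for the first inequality (the factor $2$ beats $1+2p$ when $p<\tfrac12$), though the direct route would still suffice for the second inequality, which is the one actually needed.
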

\begin{proof}
By the definition of the stochastic gradient $G$, the expectation $\mathbb{E}[\Vert G(x)-G(x_p)\Vert^2] = \mathbb{E}_{x, x_p} [\Vert G(x)-G(x_p)\Vert^2]$ equals 
\begin{align*}
& (1-p)p\left\|\frac{\nabla f(x)}{1-p}-\lambda\frac{\nabla\psi(x_p)}{p}\right\|^2+p(1-p)\left\|\lambda\frac{\nabla \psi(x)}{p}-\frac{\nabla f(x_p)}{1-p}\right\|^2\\
&=(1-p)^2\Vert I_1-I_2\Vert^2+p^2\Vert I_3-I_4\Vert^2
+(1-p)p\Vert I_1-I_4\Vert^2+p(1-p)\Vert I_3-I_2\Vert^2, 
\end{align*}
where for simplicity, we denote $I_1=\frac{\nabla f(x)}{1-p},I_2=\frac{\nabla f(x_p)}{1-p},I_3=\lambda\frac{\nabla \psi(x)}{p},I_4=\lambda\frac{\nabla\psi(x_p)}{p}$. 
Expressing $(1-p)^2\Vert I_1-I_2\Vert^2$ as $(1-p) \Vert I_1-I_2\Vert^2 - (1-p)p \Vert I_1-I_2\Vert^2$ and the term $p^2\Vert I_3-I_4\Vert^2$ in the same way, we know that $\mathbb{E}[\Vert G(x)-G(x_p)\Vert^2]$ equals  
\begin{eqnarray*}
&&(1-p)\Vert I_1-I_2\Vert^2+p\Vert I_3-I_4\Vert^2 \\
&&+(1-p)p\left(\Vert I_1-I_4\Vert^2+\Vert I_3-I_2\Vert^2 - \Vert I_1-I_2\Vert^2 - \Vert I_3-I_4\Vert^2\right) \\
&=&(1-p)\Vert I_1-I_2\Vert^2+p\Vert I_3-I_4\Vert^2 \\
&&+2(1-p)p(\langle I_1,I_2\rangle+\langle I_3,I_4\rangle-\langle I_1,I_4\rangle-\langle I_2,I_3\rangle).
\end{eqnarray*}
Observe that $-2 \langle I_1,I_4\rangle \leq \Vert I_1\Vert^2 + \Vert I_4\Vert^2$ and the same bound holds for $-2\langle I_2,I_3\rangle$. We have 
\begin{eqnarray*}
\mathbb{E}[\Vert G(x)-G(x_p)\Vert^2]&\leq&(1-p)\Vert I_1-I_2\Vert^2+p\Vert I_3-I_4\Vert^2 \\
&&+(1-p)p\left(\Vert I_1+I_2\Vert^2+\Vert I_3+I_4\Vert^2\right).
\end{eqnarray*}
Expanding $I_1+I_2$ as $I_1-I_2 + 2 I_2$, we bound $\Vert I_1+I_2\Vert^2$ by $2\Vert I_1-I_2\Vert^2 + 2 \Vert 2 I_2\Vert^2$ and do the same for the term $\Vert I_3+I_4\Vert^2$. It follows that 
\begin{eqnarray*}
\mathbb{E}[\Vert G(x)-G(x_p)\Vert^2]&\leq&(1-p)(1+2p) \Vert I_1-I_2\Vert^2+p(3-2p) \Vert I_3-I_4\Vert^2  \\
&&+8(1-p)p\left(\Vert I_2\Vert^2+\Vert I_4\Vert^2\right).
\end{eqnarray*}
To estimate the norm squares $\Vert I_2\Vert^2+\Vert I_4\Vert^2$, we apply the explicit formulae (\ref{nablaf}), (\ref{nablapsi}) for $\nabla f(x_p), \nabla\psi(x_p)$ and find 
\begin{eqnarray*}
\Vert I_2\Vert^2+\Vert I_4\Vert^2 &=& \frac{1}{(1-p)^2 n^2} \sum_{i=1}^n \left\|\nabla f_i(x_{p,i})\right\|^2 + \frac{\lambda^2}{p^2 n^2} \sum_{i=1}^n \left\| x_{p,i}-\bar{x}_p\right\|^2 \\
&\leq& \frac{\sigma_{x_p}^2}{p(1-p)}\leq\frac{\sigma_{m}^2}{p(1-p)}. 
\end{eqnarray*}
Hence 
\begin{eqnarray*}
\mathbb{E}[\Vert G(x)-G(x_p)\Vert^2]\leq(1-p)(1+2p) \Vert I_1-I_2\Vert^2+p(3-2p) \Vert I_3-I_4\Vert^2  +8\sigma_{m}^2. 
\end{eqnarray*}
We apply Lemma \ref{lemma_smooth} and the equivalence of smoothness and Lipschitz continuous gradient to the functions $f$ and $\psi$, and find 
$$ \Vert I_1-I_2\Vert^2 = \frac{1}{(1-p)^2} \left\|\nabla f(x) -\nabla f(x_p)\right\|^2 \leq \frac{L_f^2}{(1-p)^2} \left\|x -x_p\right\|^2 $$
and 
$$ \Vert I_3-I_4\Vert^2 = \frac{\lambda^2}{p^2} \left\|\nabla \psi(x) -\nabla \psi(x_p)\right\|^2 \leq \frac{\lambda^2L_{\psi}^2}{p^2} \left\|x -x_p\right\|^2. $$
It follows that 
$$\mathbb{E}[\Vert G(x)-G(x_p)\Vert^2]\leq \Big(\frac{1+2p}{1-p} L_f^2+\frac{(3-2p)\lambda^2}{p}L_{\psi}^2\Big) \left\|x -x_p\right\|^2 +8\sigma_{m}^2. 
$$
Applying Lemma \ref{lemma_QG} and letting $\zeta = \frac{1+2p}{1-p} L_f^2+\frac{(3-2p)\lambda^2}{p}L_{\psi}^2$, we can continue our analysis as 
\begin{align*}
\mathbb{E}[\Vert G(x)-G(x_p)\Vert^2]&\le\zeta \left\|x -x_p\right\|^2 +8\sigma_{m}^2\\
&\le \frac{2\zeta}{\mu}(F(x)-F^*)+8\sigma_{m}^2.
\end{align*}
This proves the first desired estimate. 

For the second estimate, we express $G(x)= \left\{G(x)-G(x_p)\right\} + G(x_p)$ and bound 
$\Vert G(x)\Vert^2$ by $2 \Vert G(x)-G(x_p)\Vert^2 + 2 \Vert G(x_p)\Vert^2$. Then the desired estimate follows from the expression 
\begin{align*}
\mathbb{E}[\Vert G(x_p)\Vert^2] &= (1-p)\left\|\frac{\nabla f(x_p)}{1-p}\right\|^2+p\left\|\frac{\lambda\nabla \psi(x_p)}{p}\right\|^2 \\
&=\frac{1}{n^2}\sum_{i=1}^{n}\left(\frac{1}{1-p}\Vert\nabla f_i(x_{p,i})\Vert^2+\frac{\lambda^2}{p}\Vert x_{p,i}-\bar{x}_p\Vert^2\right) =
\sigma_{x_p}^2\le\sigma_{m}^2.
\end{align*}
The proof of Lemma \ref{lemma_0} is complete. 
\end{proof}

\subsection{Deriving convergence rates with a fixed step size}

Now we can prove Theorem~\ref{theorem_PL_fixed}.

\begin{proof}[Proof of Theorem~\ref{theorem_PL_fixed}] 
As we have the smoothness in Lemma~\ref{lemma_smooth} and the PL condition of the function $F$, we can follow the estimate of the upper bound in \eqref{PL_est} to find 
$$\mathbb{E}[F(x^{k+1})-F^*]\le (1-2\alpha\mu)[F(x^{k})-F^*] + \frac{L_F}{2}\alpha^2\mathbb{E}\Vert G(x^k)\Vert^2.$$
Then we plug the estimate of $\mathbb{E}\Vert G(x^k)\Vert^2$ of Lemma \ref{lemma_0} to obtain
\begin{align}
\label{one_iter_bound_PL}
    \mathbb{E}[F(x^{k+1})-F^*] &\le (1-2\alpha\mu+\frac{2\zeta L_F\alpha^2}{\mu})[F(x^{k})-F^*] + 9L_F\alpha^2\sigma_{m}^2\nonumber\\
    &\le (1-\alpha\mu)[F(x^{k})-F^*] + 9L_F\alpha^2\sigma_{m}^2,
\end{align}
where the last inequality is obtained by letting $\alpha\le\frac{\mu^2}{2\zeta L_F}$. Applying this one-step estimate recursively, we have 
\begin{align*}
    \mathbb{E}[F(x^{k+1})-F^*] &\le (1-\alpha\mu)^k[F(x^{1})-F^*] + 9L_F\alpha^2\sigma_{m}^2\sum_{i=0}^{k-1}(1-\alpha\mu)^i\\
    &\le (1-\alpha\mu)^k[F(x^{1})-F^*] + \frac{9L_F\alpha\sigma_{m}^2}{\mu}.
\end{align*}
This completes the proof of Theorem~\ref{theorem_PL_fixed}. 
\end{proof}

\subsection{Deriving convergence rates with decaying step sizes}

We turn to the proof of Theorem~\ref{theorem_PL_decay}. We need the following elementary inequality which can be found in ~\cite{smale2009online}. 

\begin{lemma}
\label{lemma_4}
For any $v,a>0$, there holds
\[e^{-vx}\le(\frac{a}{ve})^a x^{-a}, \qquad \forall x>0.\]
\end{lemma}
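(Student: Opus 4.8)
The plan is to recognize this elementary inequality as nothing more than a sharp upper bound on the maximum of a one-variable function, and to locate that maximum by calculus. Since $x>0$ forces $x^{a}>0$, the claimed inequality $e^{-vx}\le(\frac{a}{ve})^{a}x^{-a}$ is equivalent, after multiplying through by $x^{a}$, to the statement that
\[
h(x):=x^{a}e^{-vx}\le\Big(\frac{a}{ve}\Big)^{a}, \qquad \forall x>0.
\]
So the entire task reduces to showing that $h$ on $(0,\infty)$ attains a global maximum equal to the constant $(\frac{a}{ve})^{a}$.

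To carry this out I would pass to logarithms, which is legitimate since both sides are strictly positive, and study $\phi(x):=\log h(x)=a\log x-vx$. Differentiating gives $\phi'(x)=\frac{a}{x}-v$, which vanishes at the unique point $x^{*}=a/v$, while $\phi''(x)=-a/x^{2}<0$, so $\phi$ is strictly concave on $(0,\infty)$ and $x^{*}$ is therefore its global maximizer. Evaluating, $\phi(x^{*})=a\log(a/v)-a=a(\log a-\log v-1)=\log\big((\tfrac{a}{ve})^{a}\big)$, and exponentiating yields $h(x)\le(\frac{a}{ve})^{a}$ for all $x>0$, which is exactly the desired inequality.

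I do not expect a genuine obstacle here: this is a routine single-variable optimization, and the only point requiring a word of justification is that the unique critical point is a global, not merely local, maximum. The concavity of $\phi$ settles this immediately; alternatively one may argue directly from $h'(x)=x^{a-1}e^{-vx}(a-vx)$, which is positive for $x<x^{*}$ and negative for $x>x^{*}$, together with the boundary behaviour $h(x)\to0$ as $x\to0^{+}$ and as $x\to\infty$. If one prefers to minimise the bookkeeping, the substitution $y=vx$ reduces the claim to the single-parameter case $v=1$, namely $y^{a}e^{-y}\le(a/e)^{a}$, after which the maximisation is performed once and for all at $y=a$.
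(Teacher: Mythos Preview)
Your argument is correct: rewriting the inequality as $x^{a}e^{-vx}\le(\tfrac{a}{ve})^{a}$ and maximising the left-hand side via the concave function $\phi(x)=a\log x-vx$ is exactly the right approach, and the computation at $x^{*}=a/v$ is accurate. The paper itself does not supply a proof of this lemma at all; it simply states the inequality and cites \cite{smale2009online}. So your proposal in fact goes beyond what the paper provides, giving a clean self-contained derivation where the paper is content to quote the result.
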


\begin{proof}[Proof of Theorem~\ref{theorem_PL_decay}] 
Following the upper bound (\ref{one_iter_bound_PL}) obtained in the proof of Theorem~\ref{theorem_PL_fixed} which is formulated with $\alpha = \alpha_k$ as
\begin{align*}
    \mathbb{E}[F(x^{k+1})-F^*] \le (1-\alpha_k\mu)[F(x^{k})-F^*] + 9L_F\alpha_k^2\sigma_{m}^2. 
\end{align*}
We know from the choice $\alpha_k=\alpha_1 k^{-\theta}\le\frac{1}{2\mathcal{L}}$ that the residual error  $R^{k+1}:=F(x^{k+1})-F^*$ can be estimated as 
\begin{eqnarray*}
\mathbb{E}[R^{k+1}] &\le& \prod_{i=1}^{k}\left(1-\frac{\mu \alpha_1}{i^{\theta}}\right) R^{1} \\
&& +9L_F\sigma_{m}^2 \sum_{i=1}^{k-1}\prod_{j=i+1}^k \left(1-\frac{\mu \alpha_1}{j^{\theta}}\right) \frac{\alpha_1^2}{i^{2\theta}}+ \frac{9L_F\sigma_{m}^2 \alpha_1^2}{k^{2\theta}}.
\end{eqnarray*}

The first term in the above bound is easy to deal with. We have 
\begin{align*}
&\prod_{i=1}^{k}\left(1-\frac{\mu \alpha_1}{i^{\theta}}\right) 
\le\exp\left\{-\sum_{i=1}^{k}\frac{\mu \alpha_1}{i^{\theta}}\right\}\le\exp\left\{-\mu \alpha_1\int_{1}^{k+1} x^{-\theta}dx \right\}. 
\end{align*} 
So, when $0<\theta<1$, we have $\int_{1}^{k+1} x^{-\theta}dx = \frac{(k+1)^{1-\theta}-1}{1-\theta}$ and then apply Lemma~\ref{lemma_4} with $a=\frac{1}{1-\theta}, v= \frac{\mu \alpha_1}{1-\theta}$ to obtain 
$$ \prod_{i=1}^{k}\left(1-\frac{\mu \alpha_1}{i^{\theta}}\right) 
\le\exp\Bigg\{\frac{\mu \alpha_1}{1-\theta}\Bigg\}\left(\frac{1}{\mu \alpha_1 e}\right)^\frac{1}{1-\theta}k^{-1}. $$ 
When $\theta=1$, we also have $\int_{1}^{k+1} x^{-\theta}dx = \log (k+1)$ and then 
$$ \prod_{i=1}^{k}\left(1-\frac{\mu \alpha_1}{i^{\theta}}\right) 
\le (k+1)^{-\mu \alpha_1}. $$ 

The second term without a constant factor can be bounded first as 
\begin{align*}
&\sum_{i=1}^{k-1}\prod_{j=i+1}^k \left(1-\frac{\mu \alpha_1}{j^{\theta}}\right) \frac{1}{i^{2\theta}} \leq 
\sum_{i=1}^{k-1} \frac{1}{i^{2\theta}} \exp\Bigg\{-\mu \alpha_1\sum_{j=i+1}^{k} j^{-\theta}\Bigg\} = I_1 + I_2:\\
&=\sum_{k/2\le i\le k} \frac{1}{i^{2\theta}} \exp\Bigg\{-\mu \alpha_1\sum_{j=i+1}^{k} j^{-\theta}\Bigg\} +\sum_{1\le i< k/2} \frac{1}{i^{2\theta}} \exp\Bigg\{-\mu \alpha_1\sum_{j=i+1}^{k} j^{-\theta}\Bigg\}.
\end{align*} 
Then we bound $I_1$ and $I_2$ separately. 

Consider the case $0<
\theta<1$. For $I_1$, we observe that 
$$\sum_{j=i+1}^k j^{-\theta}\ge\int_{i+1}^{k+1}x^{-\theta}dx= \frac{(k+1)^{1-\theta}}{1-\theta}\left(1-\left(\frac{i+1}{k+1}\right)^{1-\theta}\right) $$
and the elementary inequality $(1-x)^{1-\theta}\le 1-(1-\theta)x$ for $0\leq x\leq 1$ yields 
$$\left(\frac{i+1}{k+1}\right)^{1-\theta} = \left(1-\frac{k-i}{k+1}\right)^{1-\theta} \leq 1- (1-\theta) \frac{k-i}{k+1}, $$
we have 
\begin{align*}
I_1&\le\sum_{k/2\le i\le k} \left(\frac{2}{k}\right)^{2\theta} \exp\Bigg\{-{\mu \alpha_1(k+1)^{-\theta}}(k-i)\Bigg\}\\
&\le \left(\frac{2}{k}\right)^{2\theta} \left\{1+  \int_0^{\frac{k}{2}}\exp\Bigg\{-{\mu \alpha_1 (k+1)^{-\theta}}x\Bigg\}dx\right\} \\
&\le  \left(\frac{2}{k}\right)^{2\theta} \left\{1+ \frac{(k+1)^\theta}{\mu \alpha_1}\right\}\le\frac{2^{3\theta}}{\mu \alpha_1}k^{-\theta}. 
\end{align*}
For $I_2$, we notice that $\frac{i+1}{k+1} \leq \frac{(k-1)/2 +1}{k+1} \leq \frac{1}{2}$ for $1 \leq i < k/2$ and $1/i^{2\theta} \leq 1$, and thereby 
$$
I_2 \le \frac{k}{2} \exp\Bigg\{-\frac{\mu \alpha_1 (1-2^{\theta-1})}{1-\theta}(k+1)^{1-\theta}\Bigg\}$$
which in connection with Lemma~\ref{lemma_4} with $a=\frac{2}{1-\theta}$ and $v=\frac{\mu \alpha_1 (1-2^{\theta-1})}{1-\theta}$ yields 
$$ I_2 \leq  \frac{k}{2}\left(\frac{2}{\mu \alpha_1 e(1-2^{\theta-1})}\right)^{\frac{2}{1-\theta}}(k+1)^{-2} \le \frac{1}{2}\left(\frac{2}{\mu \alpha_1 e(1-2^{\theta-1})}\right)^{\frac{2}{1-\theta}} k^{-1}.
$$
Combining the above estimates, we have 
$$
\mathbb{E}[R^{k+1}]\le C_{\mu, \alpha_1, \theta, R^{1}} k^{-\theta}$$
where $C_{\mu, \alpha_1, \theta, R^{1}}$ is a constant independent of $k$ given by 
\begin{align*} C_{\mu, \alpha_1, \theta, R^{1}} &= \exp\Bigg\{\frac{\mu \alpha_1}{1-\theta}\Bigg\}\left(\frac{1}{\mu \alpha_1 e}\right)^\frac{1}{1-\theta}R^1 \\
&+ 9L_F\sigma_{m}^2 \alpha_1^2 \left(\frac{2^{3\theta}}{\mu \alpha_1} + \frac{1}{2}\left(\frac{2}{\mu \alpha_1 e(1-2^{\theta-1})}\right)^{\frac{2}{1-\theta}} +1\right). 
\end{align*}
This verifies the desired rate of convergence (\ref{ratesofconvergence}) in the case $0<\theta <1$. 

For the case $\theta=1$, we also have 
\begin{align*} 
I_1 + I_2 &\leq \sum_{i=1}^{k-1} \frac{1}{i^{2}} \exp\Bigg\{-\mu \alpha_1 \int_{i+1}^{k+1}\frac{1}{x}dx \Bigg\} \\
&= (k+1)^{-\mu \alpha_1} \sum_{i=1}^{k-1} \frac{(i+1)^{\mu \alpha_1}}{i^2} 
\leq 2^{\mu \alpha_1} (k+1)^{-\mu \alpha_1} \sum_{i=1}^{k-1} i^{\mu \alpha_1 -2}.  
\end{align*}
But 
$$ \sum_{i=1}^{k-1} i^{\mu \alpha_1 -2} \leq \left\{\begin{array}{ll} 
1 + \int_1^\infty x^{\mu \alpha_1 -2} d x \leq 1 + \frac{1}{1-\mu \alpha_1}, & \hbox{if}  \ \mu \alpha_1 <1, \\
1 + \int_1^k \frac{1}{x} d x \leq 1 + \log k, & \hbox{if}  \ \mu \alpha_1 =1, \\
1 + \int_1^{k} x^{\mu \alpha_1 -2} d x \leq 1 + \frac{k^{\mu \alpha_1 -1} - 1}{\mu \alpha_1 -1}, & \hbox{if}  \ \mu \alpha_1 > 1, 
\end{array}\right. $$
Therefore, we have 
$$
\mathbb{E}[R^{k+1}]\le C_{\mu, \alpha_1, \theta, R^{1}, n} 
\left\{\begin{array}{ll} 
k^{-\mu \alpha_1}, & \hbox{if}  \ \mu \alpha_1 <1, \\
\frac{1 + \log k}{k}, & \hbox{if}  \ \mu \alpha_1 =1, \\
\frac{1}{k}, & \hbox{if}  \ \mu \alpha_1 > 1. 
\end{array}\right. $$
where the constant $C_{\mu, \alpha_1, \theta, R^{1}, n}$ is independent of $k$ given by 
\begin{align*} C_{\mu, \alpha_1, \theta, R^{1}} &=  R^{1} + 18\sigma_{m}^2 \alpha_1^2 
+ 18\sigma_{m}^2 \alpha_1^2 2^{\mu \alpha_1} \left\{\begin{array}{ll} 
\frac{2- \mu \alpha_1}{1-\mu \alpha_1}, & \hbox{if}  \ \mu \alpha_1 <1, \\
1, & \hbox{if}  \ \mu \alpha_1 =1, \\
\frac{\mu \alpha_1}{\mu \alpha_1 -1}, & \hbox{if}  \ \mu \alpha_1 > 1. 
\end{array}\right.
\end{align*}
This verifies the desired rate of convergence (\ref{ratesofconvergence_PL}) for the case $\theta=1$ and completes the proof of Theorem~\ref{theorem_PL_decay}. 
\end{proof}

\section{Proof of Results in the Convex Situation}
\label{proof_convex}

To illustrate further our ideas of the proof, we denote $x^*=x(\lambda)$ and the residual $r^k=x^k-x^*$. 
By the definition (\ref{L2GDnew}) of the sequence $\{x^k\}$, we have $r^{k+1}=r^k-\alpha_k G(x^k)$. Taking inner products with themselves on $\mathbb{R}^d$ for both sides yields 
\begin{align*}
\Vert r^{k+1}\Vert^2 =\Vert r^k\Vert^2 -2\alpha_k \langle r^k,G(x^k) \rangle + \alpha_k^2\Vert G(x^k)\Vert^2.
\end{align*}
As $G(x)$ is an unbiased estimator of $\nabla F(x)$, taking expectations conditioned on $x^k$ gives 
\begin{equation}\label{onestepiteration}
\mathbb{E}_{x_k}\Vert r^{k+1}\Vert^2 = \Vert r^k\Vert^2 -2\alpha_k \langle r^k,\nabla F(x^k) \rangle + \alpha_k^2\mathbb{E}_{x_k}\Vert G(x^k)\Vert^2.
\end{equation}
The core of our analysis here is on the middle term 
\begin{equation}\label{middletermexpress}
-2\alpha_k \langle r^k,\nabla F(x^k) \rangle = 2\alpha_k \nabla F(x^k)^T \left(x^* -x^k\right). 
\end{equation}
This term can be bounded by applying the definition (\ref{strongconvexdef}) of the $\mu_F$-strongly convex $F$ as 
$$
-2\alpha_k \langle r^k,\nabla F(x^k) \rangle \leq 2\alpha_k \left(F(x^*) - F(x^k) - \frac{\mu_F}{2}\Vert x^* -x^k\Vert^2\right). 
$$
Hence  
\begin{equation}\label{middletermest}
\mathbb{E}_{x_k}\Vert r^{k+1}\Vert^2 \leq \left(1- \alpha_k \mu_F\right) \Vert r^k\Vert^2 -2\alpha_k \left(F(x^k) - F(x^*)\right)+ \alpha_k^2\mathbb{E}_{x_k}\Vert G(x^k)\Vert^2
\end{equation} 
and convergence and rates of convergence can be obtained from 
$$\Pi_{i=1}^k \left(1- \alpha_i \mu_F\right) \leq \Pi_{i=1}^k \exp\left\{- \alpha_i \mu_F\right\} =  \exp\left\{- \left(\sum_{i=1}^k \alpha_i\right) \mu_F\right\} $$
after the strongly convex exponent $\mu_F$ is found and the minor term $\alpha_k^2\mathbb{E}_{x_k}\Vert G(x^k)\Vert^2$ is estimated. 

A lower bound for the middle term (\ref{middletermexpress}) of the one-step iteration (\ref{onestepiteration}), 
to derive the necessity of convergence, can be obtained by taking the sum of (\ref{smoothdef}) applied to the
$L_F-$smooth function $F$ with the two pairs $(x, y) = (x^k, x^*)$ and $(x^*, x^k)$: 
$$ 0 \leq \left\{\nabla F(x^*)^T (x^k -x^*)+\frac{L_F}{2}\Vert r^k\Vert^2\right\} + \left\{\nabla F(x^k)^T (x^* -x^k)+\frac{L_F}{2}\Vert r^k\Vert^2\right\}
$$
which, together with (\ref{onestepiteration}) and the identity $\nabla F(x^*) =0$, leads to 
\begin{equation}\label{middletermestlower}
\mathbb{E}_{x_k}\Vert r^{k+1}\Vert^2 \geq \left(1- 2 \alpha_k L_F\right) \Vert r^k\Vert^2 + \alpha_k^2\mathbb{E}_{x_k}\Vert G(x^k)\Vert^2. 
\end{equation} 

\subsection{Convexity analysis}

To carry out the detailed analysis with the upper bound (\ref{middletermest}) and the lower one (\ref{middletermestlower}) of the one-step iteration, we need the strongly convex exponent $\mu_F$, the smooth exponent $L_F$, and refined estimates of the minor term $\alpha_k^2\mathbb{E}_{x_k}\Vert G(x^k)\Vert^2$. 

Recall that $F= f + \lambda \psi$ and $f(x)=\frac{1}{n}\sum_{i=1}^n f_i(x_i), \psi(x)=\frac{1}{2n}\sum_{i=1}^n \Vert x_i-\bar{x}\Vert ^2.$

\begin{lemma}
\label{lemma_2}
Under Assumption~\ref{assumption_1}, $f$ is $L_f$-smooth with $L_f=\frac{L}{n}$, $\mu_f$-strongly convex with $\mu_f=\frac{\mu}{n}$, $\psi$ is convex and $L_\psi$-smooth with $L_\psi=\frac{1}{n}$, and $F$ is $\frac{\mu}{n}$-strongly convex.
\end{lemma}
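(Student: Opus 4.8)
The plan is to reuse the Hessian computations from the proof of Lemma~\ref{lemma_smooth}, now exploiting the additional strong-convexity hypothesis on each $f_i$ in Assumption~\ref{assumption_1}. First I would recall from~\eqref{nablaf} that $f$ is separable across the blocks $x_i$, so $\nabla^2 f(x) = \frac{1}{n}\,\mathrm{diag}\big(\nabla^2 f_1(x_1),\ldots,\nabla^2 f_n(x_n)\big)$. Assumption~\ref{assumption_1} gives $\mu I_d \preceq \nabla^2 f_i(x_i) \preceq L I_d$ for every $i$, whence $\frac{\mu}{n} I_{nd} \preceq \nabla^2 f(x) \preceq \frac{L}{n} I_{nd}$ for all $x$. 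The upper bound re-establishes $L_f=\frac{L}{n}$-smoothness as in Lemma~\ref{lemma_smooth}; the lower bound, via the standard equivalence between a uniform Hessian lower bound and the inequality~\eqref{strongconvexdef}, gives $\mu_f$-strong convexity of $f$ with $\mu_f=\frac{\mu}{n}$.

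Next I would treat $\psi$ exactly as in Lemma~\ref{lemma_smooth}: its Hessian is $\frac{1}{n}\big(I_n-\frac{1}{n}ee^T\big)$ acting blockwise, a projection scaled by $\frac1n$, so its eigenvalues lie in $\{0,\frac1n\}$; in particular $0\preceq\nabla^2\psi(x)\preceq\frac1n I_{nd}$, which says $\psi$ is convex and $L_\psi=\frac1n$-smooth. Finally, for $F=f+\lambda\psi$ with $\lambda\ge 0$ we simply add Hessians: $\nabla^2 F(x)=\nabla^2 f(x)+\lambda\nabla^2\psi(x)\succeq\frac{\mu}{n} I_{nd}$, since the second term is positive semidefinite, so $F$ is $\frac{\mu}{n}$-strongly convex. (Its $\frac{L+\lambda}{n}$-smoothness is already recorded in Lemma~\ref{lemma_smooth}.)

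I do not expect a genuine obstacle here; the statement is essentially a recomputation. The only two points needing a line of care — the passage from a uniform Hessian bound to the quadratic inequalities~\eqref{smoothdef}/\eqref{strongconvexdef} (Taylor expansion with integral remainder) and the eigenvalue count of the circulant projection $I_n-\frac1n ee^T$ — were already dispatched in the proof of Lemma~\ref{lemma_smooth}, so I would just cite them. If one wishes to avoid second derivatives altogether (the $f_i$ are not assumed $C^2$), I would argue directly from the definitions: $f(x)-f(y)-\nabla f(y)^T(x-y)=\frac1n\sum_{i=1}^n\big(f_i(x_i)-f_i(y_i)-\nabla f_i(y_i)^T(x_i-y_i)\big)$, each summand is sandwiched between $\frac{\mu}{2}\Vert x_i-y_i\Vert^2$ and $\frac{L}{2}\Vert x_i-y_i\Vert^2$ by~\eqref{smoothdef}--\eqref{strongconvexdef} applied to $f_i$, and summing over $i$ with $\sum_i\Vert x_i-y_i\Vert^2=\Vert x-y\Vert^2$ yields the claims for $f$; convexity of $\psi$ then follows since $x\mapsto(x_i-\bar x)_{i=1}^n$ is linear and $z\mapsto\frac{1}{2n}\Vert z\Vert^2$ is convex, and strong convexity of $F$ follows by adding.
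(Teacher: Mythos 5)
Your argument is correct and follows essentially the same route as the paper: block-diagonal Hessian bounds $\frac{\mu}{n}I_{nd}\preceq\nabla^2 f(x)\preceq\frac{L}{n}I_{nd}$ from Assumption~\ref{assumption_1}, the scaled projection $\frac{1}{n}(I_n-\frac{1}{n}ee^T)$ with eigenvalues in $\{0,1\}$ for $\psi$, and adding the (positive semidefinite) $\lambda\nabla^2\psi$ term to conclude $F$ is $\frac{\mu}{n}$-strongly convex. Your closing remark giving a Hessian-free argument directly from \eqref{smoothdef}--\eqref{strongconvexdef} is a sensible extra safeguard (the $f_i$ are not assumed twice differentiable), but the core proof matches the paper's.
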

\begin{proof} Following the proof of Lemma 1, we observe 
\begin{equation}
\nabla f(x)=\frac{1}{n}(\nabla f_1(x_1),\nabla f_2(x_2),\cdots,\nabla f_n(x_n))^T, 
\end{equation} 
and 
$$ \nabla^2 f(x)=\frac{1}{n}\text{diag}(\nabla^2 f_1(x_1),\nabla^2 f_2(x_2),\cdots,\nabla^2 f_n(x_n)). $$
Then by Assumption~\ref{assumption_1}, we have 
$$ \mu I_d\preceq\nabla^2 f_i(x_i)\preceq L I_d $$
which implies 
$$ \frac{\mu}{n} I_{nd}\preceq\nabla^2 f(x)\preceq \frac{L}{n} I_{nd}$$
and thereby $f$ is $\frac{L}{n}$-smooth and $\frac{\mu}{n}$-strongly convex. 

As in the proof of Lemma 1, we find the Hessian of the function $\psi$ to be 
\[\nabla^2\psi(x)=\frac{1}{n}(I_n-\frac{1}{n}ee^T),\]
where $I_n-\frac{1}{n}ee^T$ is a circulant matrix with each eigenvalue either zero or one. Thus, 
$0\preceq\nabla^2\psi(x)\preceq\frac{1}{n} I$, telling us that $\psi$ is convex and $\frac{1}{n}$-smooth. It follows that $F$ is $\frac{\mu}{n}$-strongly convex.

\end{proof}

To refine the previous estimate of the minor term $\alpha_k^2\mathbb{E}_{x_k}\Vert G(x^k)\Vert^2$ under Assumption~\ref{assumption_1}, we need the concept of Bregman distance. 

\begin{definition}
Let $h: \Omega\to \mathbb{R}$ be a continuously differentiable, strictly convex function defined on a closed convex set $\Omega$. The associated {\bf Bregman distance}
 between $x, y \in \Omega$ is defined by  
$$D_h(x,y)=h(x)-h(y)-\langle\nabla h(y),x-y\rangle. $$
\end{definition}

The following lower bound for the Bregman distance associated with a smooth convex function is well known. We give a proof for completeness. 
    
\begin{lemma}
\label{lemma_1}
If $g: \mathbb{R}^d\to \mathbb{R}$ is convex and $L_g$-smooth, then
\[D_g(x,y)\ge \frac{1}{2 L_g}\Vert\nabla g(x)-\nabla g(y)\Vert^2, \qquad\forall x,y\in \mathbb{R}^d \]
and 
$$ \Vert\nabla g(x)-\nabla g(y)\Vert \leq L_g \Vert x-y\Vert, \qquad\forall x,y\in \mathbb{R}^d.
$$
\end{lemma}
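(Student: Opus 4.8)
The plan is to use the classical ``auxiliary function'' argument (essentially the proof of the Baillon--Haddad co-coercivity property). Fix $x\in\mathbb{R}^d$ and consider the function $\phi_x(z):=g(z)-\langle\nabla g(x),z\rangle$. Adding a linear term changes neither convexity nor the smoothness constant, so $\phi_x$ is convex and $L_g$-smooth; moreover $\nabla\phi_x(x)=0$, and hence, by convexity, $x$ is a global minimizer of $\phi_x$. This is the only place where convexity of $g$ is genuinely used, and it is what turns the one-sided quadratic bound (\ref{smoothdef}) into a lower bound on the Bregman distance.

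The key step is to apply the smoothness inequality (\ref{smoothdef}) for $\phi_x$ to the pair $(u,v)=\bigl(z-\tfrac{1}{L_g}\nabla\phi_x(z),\,z\bigr)$, which gives $\phi_x\!\left(z-\tfrac{1}{L_g}\nabla\phi_x(z)\right)\le\phi_x(z)-\tfrac{1}{2L_g}\Vert\nabla\phi_x(z)\Vert^2$. Since $x$ minimizes $\phi_x$, we have $\phi_x(x)\le\phi_x\!\left(z-\tfrac{1}{L_g}\nabla\phi_x(z)\right)$, so that $\phi_x(x)\le\phi_x(z)-\tfrac{1}{2L_g}\Vert\nabla\phi_x(z)\Vert^2$. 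Unwinding $\phi_x(z)-\phi_x(x)=g(z)-g(x)-\langle\nabla g(x),z-x\rangle=D_g(z,x)$ and $\nabla\phi_x(z)=\nabla g(z)-\nabla g(x)$, this rearranges to $D_g(z,x)\ge\tfrac{1}{2L_g}\Vert\nabla g(z)-\nabla g(x)\Vert^2$. Relabelling $z\mapsto x$, $x\mapsto y$ yields the first inequality of the lemma.

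For the second inequality I would add the first inequality to the one obtained by exchanging the roles of $x$ and $y$, and use the identity $D_g(x,y)+D_g(y,x)=\langle\nabla g(x)-\nabla g(y),\,x-y\rangle$ to get $\langle\nabla g(x)-\nabla g(y),\,x-y\rangle\ge\tfrac{1}{L_g}\Vert\nabla g(x)-\nabla g(y)\Vert^2$. Applying the Cauchy--Schwarz inequality to the left-hand side and cancelling one factor of $\Vert\nabla g(x)-\nabla g(y)\Vert$ (the case $\nabla g(x)=\nabla g(y)$ being trivial) gives $\Vert\nabla g(x)-\nabla g(y)\Vert\le L_g\Vert x-y\Vert$.

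I do not anticipate a real obstacle: the only subtle point is the minimizer claim discussed above, and everything else is routine algebra with the descent inequality, the definition of $D_g$, and Cauchy--Schwarz.
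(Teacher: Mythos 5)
Your proof of the first inequality is exactly the paper's argument: the auxiliary function $\phi_x(z)=g(z)-\langle\nabla g(x),z\rangle$, the observation (using convexity) that $x$ is its global minimizer, and the descent step $z-\tfrac{1}{L_g}\nabla\phi_x(z)$ in the smoothness inequality. For the Lipschitz bound you go through co-coercivity plus Cauchy--Schwarz, while the paper simply combines the first inequality with the smoothness upper bound $D_g(x,y)\le\tfrac{L_g}{2}\Vert x-y\Vert^2$; both routes are correct and equally routine.
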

\begin{proof}
For any $y\in \mathbb{R}^d$, let $\phi: \mathbb{R}^d \to \mathbb{R}$ be given by $\phi(x)=g(x)-\langle\nabla g(y),x\rangle$. It is easy to see that for $x, u\in \mathbb{R}^d$, 
$$\phi(x+u) - \phi(x) =g(x+u) - g(x)-\langle\nabla g(y), u\rangle $$
and $\nabla \phi(x) = \nabla g(x)- \nabla g(y)$. But $g: \mathbb{R}^d\to \mathbb{R}$ is $L_g$-smooth. Hence 
$$ g(x+u) - g(x) \leq \nabla g(x)^T u + \frac{L_g}{2}\Vert u\Vert^2 $$ and thereby 
\[\phi(x+u)\le\phi(x)+\nabla\phi(x)^T u + \frac{L_g}{2}\Vert u\Vert^2.\]
Thus, $\phi$ is $L_g$-smooth. Moreover, the convexity of $g$ also tells us that 
$$ \phi(x)-\phi(y)=g(x)-g(y)-\langle\nabla g(y),x-y\rangle \geq 0$$
which means $\phi(y)$ is the minimum value of $\phi$.
Combining this with the $L_g$-smoothness of $g$ applied to 
$t=\nabla\phi(x)=\nabla g(x)- \nabla g(y)$ yields 
\begin{align*}
\phi(y)&\le\phi(x-\frac{1}{L_g}t)\\
&\le\phi(x)-\frac{1}{L_g}\nabla\phi(x)^T t + \frac{L_g}{2}\Vert \frac{1}{L_g}t\Vert^2\\
&=\phi(x)-\frac{1}{2L_g}\Vert t\Vert^2.
\end{align*}
Together with the identity $\phi(x)-\phi(y)=D_g(x,y)$, this proves the first desired inequality. The second one follows from the definition (\ref{smoothdef}) of the $L_g-$smoothness and  
the Bregman distance $D_g(x,y)$. This completes the proof of Lemma~\ref{lemma_1}.
\end{proof} 


Recall the notation $F=f+ \lambda \psi, \mathcal{L}=\frac{1}{n}
\max\{\frac{(1+2p)L}{1-p},\frac{(3-2p)\lambda}{p}\}$ and $\sigma^2=\frac{1}{n^2}\sum_{i=1}^{n}(\frac{1}{1-p}\Vert\nabla f_i(x_i(\lambda))\Vert^2+\frac{\lambda^2}{p}\Vert x_i(\lambda)-\bar{x}(\lambda)\Vert^2)$.

\begin{lemma}
\label{lemma_3} 
Under Assumption~\ref{assumption_1}, for every $x\in \mathbb{R}^d$, the stochastic gradient $G$ of $F$ satisfies 
\[\mathbb{E}[\Vert G(x)-G(x(\lambda))\Vert^2]\le 2\mathcal{L}(F(x)-F(x(\lambda)))+8\sigma^2\]
and 
\[\mathbb{E}[\Vert G(x)\Vert^2]\le 4\mathcal{L}(F(x)-F(x(\lambda)))+18\sigma^2.\]
\end{lemma}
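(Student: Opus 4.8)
The plan is to follow the proof of Lemma~\ref{lemma_0} almost verbatim through its algebraic part, and then to replace the PL/quadratic-growth detour by a direct application of the Bregman-distance inequality of Lemma~\ref{lemma_1}, which is now available because Assumption~\ref{assumption_1} makes both $f$ and $\psi$ convex (Lemma~\ref{lemma_2}). First I would expand $\mathbb{E}[\Vert G(x)-G(x(\lambda))\Vert^2]$ over the two independent Bernoulli draws into the same four-term sum as in Lemma~\ref{lemma_0}, with $I_1=\frac{\nabla f(x)}{1-p}$, $I_2=\frac{\nabla f(x(\lambda))}{1-p}$, $I_3=\frac{\lambda\nabla\psi(x)}{p}$, $I_4=\frac{\lambda\nabla\psi(x(\lambda))}{p}$. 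The identical regrouping --- writing $(1-p)^2\Vert I_1-I_2\Vert^2=(1-p)\Vert I_1-I_2\Vert^2-(1-p)p\Vert I_1-I_2\Vert^2$ (and similarly for $p^2\Vert I_3-I_4\Vert^2$), applying Young's inequality to the cross terms $-2\langle I_1,I_4\rangle$ and $-2\langle I_2,I_3\rangle$, and expanding $I_1+I_2=(I_1-I_2)+2I_2$ --- gives
\[\mathbb{E}[\Vert G(x)-G(x(\lambda))\Vert^2]\le (1-p)(1+2p)\Vert I_1-I_2\Vert^2+p(3-2p)\Vert I_3-I_4\Vert^2+8(1-p)p\left(\Vert I_2\Vert^2+\Vert I_4\Vert^2\right),\]
and here $\Vert I_2\Vert^2+\Vert I_4\Vert^2\le\frac{\sigma^2}{p(1-p)}$ follows directly from the explicit formulas (\ref{nablaf}), (\ref{nablapsi}) evaluated at the unique minimizer $x(\lambda)$, so the last term is at most $8\sigma^2$ --- with no maximum over a solution set, unlike the non-convex case.

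The new ingredient is the bound on $\Vert I_1-I_2\Vert^2$ and $\Vert I_3-I_4\Vert^2$. Rather than chaining ``Lipschitz gradient'' with ``quadratic growth'' as in Lemma~\ref{lemma_0}, I would apply the first inequality of Lemma~\ref{lemma_1} to the convex $L_f$-smooth function $f$ and the convex $L_\psi$-smooth function $\psi$:
\[\Vert\nabla f(x)-\nabla f(x(\lambda))\Vert^2\le 2L_f D_f(x,x(\lambda)),\qquad \Vert\nabla\psi(x)-\nabla\psi(x(\lambda))\Vert^2\le 2L_\psi D_\psi(x,x(\lambda)).\]
Multiplying these by $\frac{1}{(1-p)^2}$ and $\frac{\lambda^2}{p^2}$ respectively and inserting $L_f=\frac{L}{n}$, $L_\psi=\frac{1}{n}$ from Lemma~\ref{lemma_2} turns the first two terms of the displayed bound into $\frac{2(1+2p)L}{(1-p)n}D_f(x,x(\lambda))+\frac{2(3-2p)\lambda^2}{pn}D_\psi(x,x(\lambda))$.

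To finish I would use the additivity of the Bregman distance, $D_F=D_f+\lambda D_\psi$, together with $\nabla F(x(\lambda))=0$, which gives $D_F(x,x(\lambda))=F(x)-F(x(\lambda))$. Since $\mathcal{L}\ge\frac{(1+2p)L}{(1-p)n}$ and $\mathcal{L}\ge\frac{(3-2p)\lambda}{pn}$, and since $D_f,D_\psi\ge 0$ by convexity, each coefficient is bounded by $2\mathcal{L}$, so the combination is at most $2\mathcal{L}(D_f+\lambda D_\psi)=2\mathcal{L}(F(x)-F(x(\lambda)))$, which is the first claim. The second follows exactly as in Lemma~\ref{lemma_0}: bound $\Vert G(x)\Vert^2\le 2\Vert G(x)-G(x(\lambda))\Vert^2+2\Vert G(x(\lambda))\Vert^2$ and use $\mathbb{E}[\Vert G(x(\lambda))\Vert^2]=\sigma^2$ from the same explicit computation, giving $4\mathcal{L}(F(x)-F(x(\lambda)))+18\sigma^2$. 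The main obstacle is not a hard estimate but the bookkeeping that makes the coefficients collapse to exactly $2\mathcal{L}$, together with the observation that, once convexity is assumed, the Bregman-distance bound of Lemma~\ref{lemma_1} is the sharp replacement for the quadratic-growth step --- this is precisely what improves the multiplier of the function-value gap from $\frac{2\zeta}{\mu}$ (as in Lemma~\ref{lemma_0}) down to $2\mathcal{L}$.
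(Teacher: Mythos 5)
Your proposal is correct and matches the paper's own proof essentially step for step: the same four-term expansion and regrouping as in Lemma~\ref{lemma_0}, the same bound $\Vert I_2\Vert^2+\Vert I_4\Vert^2\le\sigma^2/(p(1-p))$ at $x(\lambda)$, the Bregman-distance inequality of Lemma~\ref{lemma_1} applied to $f$ and $\psi$ with the constants from Lemma~\ref{lemma_2}, and the collapse via $D_f+\lambda D_\psi=D_F$ and $\nabla F(x(\lambda))=0$ to the coefficient $2\mathcal{L}$, followed by the identical argument for the second estimate. Nothing is missing.
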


\begin{proof}
As in the proof of Lemma \ref{lemma_0}, the expectation $\mathbb{E}[\Vert G(x)-G(x(\lambda))\Vert^2] = \mathbb{E}_{x, x(\lambda)} [\Vert G(x)-G(x(\lambda))\Vert^2]$ equals 
$$(1-p)^2\Vert I_1-I_2\Vert^2+p^2\Vert I_3-I_4\Vert^2
+(1-p)p\Vert I_1-I_4\Vert^2+p(1-p)\Vert I_3-I_2\Vert^2,$$
where $\{I_i\}_{i=1}^4$ is the same as in the proof of Lemma \ref{lemma_0} except that $x_p$ is replaced by $x(\lambda)$. 
The same procedure yields 
\begin{eqnarray*}
\mathbb{E}[\Vert G(x)-G(x(\lambda))\Vert^2]&\leq&(1-p)(1+2p) \Vert I_1-I_2\Vert^2+p(3-2p) \Vert I_3-I_4\Vert^2  \\
&&+8(1-p)p\left(\Vert I_2\Vert^2+\Vert I_4\Vert^2\right).
\end{eqnarray*}
To estimate the norm squares $\Vert I_2\Vert^2+\Vert I_4\Vert^2$, we apply the explicit formulae (\ref{nablaf}), (\ref{nablapsi}) for $\nabla f(x(\lambda)), \nabla\psi(x(\lambda))$ and find 
\begin{eqnarray*}
\Vert I_2\Vert^2+\Vert I_4\Vert^2 &=& \frac{1}{(1-p)^2 n^2} \sum_{i=1}^n \left\|\nabla f_i(x_i(\lambda))\right\|^2 + \frac{\lambda^2}{p^2 n^2} \sum_{i=1}^n \left\| x_i(\lambda)-\bar{x}(\lambda)\right\|^2 \\
&\leq& \frac{\sigma^2}{p(1-p)}. 
\end{eqnarray*}
Hence 
\begin{eqnarray*}
\mathbb{E}[\Vert G(x)-G(x(\lambda))\Vert^2]\leq(1-p)(1+2p) \Vert I_1-I_2\Vert^2+p(3-2p) \Vert I_3-I_4\Vert^2  +8\sigma^2. 
\end{eqnarray*}
We apply Lemma \ref{lemma_2} and Lemma \ref{lemma_1} to the functions $f$ and $\psi$, and find 
$$ \Vert I_1-I_2\Vert^2 = \frac{1}{(1-p)^2} \left\|\nabla f(x) -\nabla f(x(\lambda))\right\|^2 \leq \frac{2 L}{n (1-p)^2} D_f(x,x(\lambda)) $$
and 
$$ \Vert I_3-I_4\Vert^2 = \frac{\lambda^2}{p^2} \left\|\nabla \psi(x) -\nabla \psi(x(\lambda))\right\|^2 \leq \frac{2 \lambda^2}{n p^2} D_\psi (x,x(\lambda)). $$
It follows that 
$$\mathbb{E}[\Vert G(x)-G(x(\lambda))\Vert^2]\leq \frac{2(1+2p)L}{n(1-p)} D_f(x,x(\lambda))+\frac{2(3-2p)\lambda^2}{np}D_\psi(x,x(\lambda)) +8\sigma^2. 
$$
Since $D_f+\lambda D_\psi = D_F$ and $\nabla F(x(\lambda))=0$, we can continue our analysis as 
\begin{align*}
\mathbb{E}[\Vert G(x)-G(x(\lambda))\Vert^2]&\le\frac{2}{n}
\max\left\{\frac{(1+2p)L}{1-p},\frac{(3-2p)\lambda}{p}\right\}D_F(x,x(\lambda))+8\sigma^2\\
&=\frac{2}{n}
\max\left\{\frac{(1+2p)L}{1-p},\frac{(3-2p)\lambda}{p}\right\}(F(x)-F(x(\lambda)))+8\sigma^2\\
&=2\mathcal{L}(F(x)-F(x(\lambda)))+8\sigma^2.
\end{align*}
This proves the first desired estimate. 

For the second estimate, we express $G(x)= \left\{G(x)-G(x(\lambda))\right\} + G(x(\lambda))$ and bound 
$\Vert G(x)\Vert^2$ by $2 \Vert G(x)-G(x(\lambda))\Vert^2 + 2 \Vert G(x(\lambda))\Vert^2$. Then the desired estimate follows from the expression 
\begin{align*}
\mathbb{E}[\Vert G(x(\lambda))\Vert^2] &= (1-p)\left\|\frac{\nabla f(x(\lambda))}{1-p}\right\|^2+p\left\|\frac{\lambda\nabla \psi(x(\lambda))}{p}\right\|^2 \\
&=\frac{1}{n^2}\sum_{i=1}^{n}\left(\frac{1}{1-p}\Vert\nabla f_i(x_i(\lambda))\Vert^2+\frac{\lambda^2}{p}\Vert x_i(\lambda)-\bar{x}(\lambda)\Vert^2\right) =
\sigma^2.
\end{align*}
The proof of Lemma \ref{lemma_3} is complete. 
\end{proof}

\subsection{Necessary and sufficient condition for the convergence}

L2GD is an SGD method, so we can use classical approaches for SGD algorithms~\cite{ying2008online, yao2010complexity, gower2019sgd} to study the convergence. 

\begin{proof}[Proof of Theorem~\ref{theorem_1}] 
We follow the steps illustrated at the beginning of this section and apply the upper bound (\ref{middletermest}) and lower one (\ref{middletermestlower}) to estimate the residual error $r^k=x^k-x^* =x^k - x(\lambda)$ iteratively. Take $\mu_F =\frac{\mu}{n}$. 

\emph{Sufficiency.} Since $F$ is $\mu_F$-strongly convex according to Lemma \ref{lemma_2}, the upper bound (\ref{middletermest}) asserts 
$$
\mathbb{E}_{x_k}\Vert r^{k+1}\Vert^2 \leq \left(1- \alpha_k \mu_F\right) \Vert r^k\Vert^2 -2\alpha_k \left(F(x^k) - F(x^*)\right)+ \alpha_k^2\mathbb{E}_{x_k}\Vert G(x^k)\Vert^2.
$$
Plugging the second bound in Lemma \ref{lemma_3} with $x=x^k$ into the above estimate, we have 
$$
\mathbb{E}_{x_k}\Vert r^{k+1}\Vert^2 \leq \left(1- \alpha_k \mu_F\right) \Vert r^k\Vert^2 +2\alpha_k (2\mathcal{L} \alpha_k-1) \left(F(x^k) - F(x^*)\right)+ 18 \alpha_k^2 \sigma^2.
$$
Since $x^*$ is a minimizer of $F$, we know $F(x^k) - F(x^*)\geq 0$. 
But $0<\alpha_k\le\frac{1}{2\mathcal{L}}$. Hence the above middle term is nonpositive and we have 
$$
\mathbb{E}_{x_k}[\Vert r^{k+1}\Vert^2] \le (1-\alpha_k\mu_F) \Vert r^{k}\Vert^2 + 18\alpha_k^2\sigma^2.
$$
Applying this bound recursively leads to 
\begin{eqnarray}\label{one_iter_bound}
\mathbb{E}_{x_k, x_{k-1}, \ldots, x_1}[\Vert r^{k+1}\Vert^2] &\le& \prod_{i=1}^{k}(1-\mu_F\alpha_i) \Vert r^{1}\Vert^2 \nonumber\\
&& +\sum_{i=1}^{k-1}\prod_{j=i+1}^k (1-\mu_F\alpha_j)18\sigma^2 \alpha_i^2+ 18\sigma^2 \alpha_k^2.
\end{eqnarray}

Now we show that $\mathbb{E}_{x_k, x_{k-1}, \ldots, x_1}[\Vert r^{k+1}\Vert^2]$ can be arbitrarily small when $k$ is large enough. Let $\varepsilon>0$. 

We separate the right-hand side of (\ref{one_iter_bound}) into four terms. The first term involves 
$$\prod_{i=1}^{k}(1-\mu_F\alpha_i) 
\leq \prod_{i=1}^{k} \exp\left\{-\mu_F\alpha_i\right\}\le\exp\left\{-\mu_F\sum_{i=1}^k\alpha_i\right\}. $$
Since $\sum_{k=1}^\infty\alpha_k=\infty$, there we can find some $k_1 \in \mathbb{N}$ such that for $k\ge k_1$, there holds $\sum_{i=1}^k \alpha_i\ge\frac{1}{\mu_F}\log\frac{1}{\varepsilon}$ which implies 
$$ \prod_{i=1}^{k}(1-\mu_F\alpha_i) \mathbb{E}_{x_1} [\Vert r^{1}\Vert^2] \leq \mathbb{E}_{x_1} [\Vert r^{1}\Vert^2] \varepsilon. $$

Since $\lim_{k\to\infty}\alpha_k=0$, there exists $k_2 \in \mathbb{N}$ such that $\alpha_i\le\mu_F\varepsilon$ whenever $i\ge k_2$. Let $k \ge k_2+2$. The second term on the right-hand side of (\ref{one_iter_bound}) is 
$$18\sigma^2\sum_{i=k_2+1}^{k-1}\prod_{j=i+1}^k (1-\mu_F\alpha_j)\alpha_i^2. $$
Since $\alpha_i\le\mu_F\varepsilon$ for every summand, we know that this term is bounded by 
\begin{align*}
&18\sigma^2 \varepsilon\sum_{i=k_2+1}^{k-1}\prod_{j=i+1}^k (1-\mu_F\alpha_j)\mu_F\alpha_i\\
&=18\sigma^2 \varepsilon\sum_{i=k_2+1}^{k-1}\prod_{j=i+1}^k (1-\mu_F\alpha_j)[1-(1-\mu_F\alpha_i)]\\
&=18\sigma^2 \varepsilon\sum_{i=k_2+1}^{k-1}\left\{\prod_{j=i+1}^k (1-\mu_F\alpha_j)-\prod_{j=i}^k (1-\mu_F\alpha_j)\right\}\\
&=18\sigma^2 \varepsilon \left\{(1-\mu_F\alpha_k)-\prod_{i=k_2 +1}^k (1-\mu_F\alpha_i)\right\}\le 18\sigma^2 \varepsilon.
\end{align*}

The third term on the right-hand side of (\ref{one_iter_bound}) is 
$$18\sigma^2\sum_{i=1}^{k_2}\prod_{j=i+1}^k (1-\mu_F\alpha_j)\alpha_i^2 
 \le \frac{18\sigma^2}{4\mathcal{L}^2}\sum_{i=1}^{k_2}\prod_{j=i+1}^k (1-\mu_F\alpha_j). $$
Since $\sum_{k=k_2 +1}^\infty\alpha_k=\infty$, there exists $k_3 \geq k_2 +1$ such that 
\[\sum_{j=k_2+1}^k \alpha_j \ge\frac{1}{\mu_F}\log\frac{k_2}{\varepsilon}, \qquad \forall k \geq k_3.\]
Then for $k\ge k_3$, we have 
\begin{align*}
18\sigma^2 \sum_{i=1}^{k_2}\prod_{j=i+1}^k (1-\mu_F\alpha_j)\alpha_i^2&\le \frac{9\sigma^2}{2\mathcal{L}^2} \sum_{i=1}^{k_2}\prod_{j=i+1}^k \exp\left\{-\mu_F\alpha_j\right\} \\
&\le \frac{9\sigma^2}{2\mathcal{L}^2} \sum_{i=1}^{k_2}\prod_{j=k_2 +1}^k \exp\left\{-\mu_F\alpha_j\right\} \\
&= \frac{9\sigma^2}{2\mathcal{L}^2}  k_2 \exp\left\{-\mu_F\sum_{j=k_2+1}^k\alpha_j\right\}\\
&\le \frac{9\sigma^2}{2\mathcal{L}^2}  k_2\cdot\frac{\varepsilon}{k_2}=\frac{9\sigma^2}{2\mathcal{L}^2}\varepsilon.
\end{align*}
Thus, when $k\ge\max\{k_1, k_3 +1\}$, since $\alpha_k \leq \frac{1}{2\mathcal{L}}$ and $\alpha_k \leq \mu_F \varepsilon$, we have
$$
\mathbb{E}_{x_k, x_{k-1}, \ldots, x_1}[\Vert r^{k+1}\Vert^2] \le \left\{\mathbb{E}_{x_1} [\Vert r^{1}\Vert^2] + 18\sigma^2 +\frac{9\sigma^2}{2\mathcal{L}^2} + \frac{9\sigma^2}{\mathcal{L}} \mu_F\right\} \varepsilon. 
$$
This verifies $\lim_{k\to\infty}\mathbb{E}_{x_k, x_{k-1}, \ldots, x_1}[\Vert x^k-x(\lambda)\Vert^2]=0$. 

\emph{Necessity.} We apply the lower bound (\ref{middletermestlower}) for the residual error $r^k=x^k-x^*$ and know from the convergence $\lim_{k\to\infty}\mathbb{E}_{x_k, x_{k-1}, \ldots, x_1}[\Vert x^{k+1}-x^*\Vert^2]=0$ that $\lim_{k\to\infty} \alpha_k^2\mathbb{E}_{x_{k-1}, \ldots, x_1}\Vert G(x^k)\Vert^2 =0$. 

Observe that the difference between $\mathbb{E}\Vert G(x^k)\Vert^2 = \mathbb{E}_{x_{k-1}, \ldots, x_1}\Vert G(x^k)\Vert^2$ and $\mathbb{E}\Vert G(x (\lambda))\Vert^2$ can be estimated as 
\begin{eqnarray*}
&&\left|\mathbb{E}\Vert G(x^k)\Vert^2- \mathbb{E}\Vert G(x (\lambda))\Vert^2\right| \\ 
&=&  \Big|\frac{1}{1-p} \mathbb{E}\left\|\nabla f(x^k)\right\|^2 + \frac{\lambda^2}{p} \mathbb{E}\left\|\nabla\psi(x^k)\right\|^2 \\
&&- \frac{1}{1-p} \mathbb{E}\left\|\nabla f(x(\lambda))\right\|^2 - \frac{\lambda^2}{p} \mathbb{E}\left\|\nabla\psi(x(\lambda))\right\|^2\Big| \\
&\leq& \frac{1}{1-p} \mathbb{E}\left\{\left\|\nabla f(x^k)- \nabla f(x(\lambda))\right\| \left( \left\|\nabla f(x^k)\right\| + \left\|\nabla f(x(\lambda))\right\|\right)\right\} \\
&&+ \frac{\lambda^2}{p} \mathbb{E}\left\{\left\|\nabla\psi(x^k)- \nabla\psi(x(\lambda))\right\| \left( \left\|\nabla\psi(x^k)\right\| + \left\|\nabla\psi(x(\lambda))\right\|\right)\right\}.
\end{eqnarray*}
Applying Lemma \ref{lemma_1} to the functions $f, \psi$ and points $x^k, x(\lambda)$, we obtain  
\begin{eqnarray*} 
&&\left\|\nabla f(x^k)- \nabla f(x(\lambda))\right\| \left( \left\|\nabla f(x^k)\right\| + \left\|\nabla f(x(\lambda))\right\|\right) \\
&\leq& L_f \left\|x^k-x(\lambda)\right\| \left(L_f \left\|x^k-x(\lambda)\right\| + 2\left\|\nabla f(x(\lambda))\right\|\right). 
\end{eqnarray*}
and a similar bound for the term with $\psi$. It follows that 
\begin{eqnarray*}
&&\left|\mathbb{E}\Vert G(x^k)\Vert^2- \mathbb{E}\Vert G(x (\lambda))\Vert^2\right| \\ 
&\leq& \frac{1}{1-p} \left\{L_f^2 \mathbb{E}\left\|x^k-x(\lambda)\right\|^2 + 2 L_f\left\|\nabla f(x(\lambda))\right\| \left(\mathbb{E}\left\|x^k-x(\lambda)\right\|^2\right)^{1/2}\right\} \\
&&+ \frac{\lambda^2}{p} \left\{L_\psi^2 \mathbb{E}\left\|x^k-x(\lambda)\right\|^2 + 2 L_\psi\left\|\nabla \psi(x(\lambda))\right\| \left(\mathbb{E}\left\|x^k-x(\lambda)\right\|^2\right)^{1/2}\right\}
\end{eqnarray*}
which converges to $0$ as $k\to\infty$. Since $0<\alpha_k\le\frac{1}{2\mathcal{L}}$ and $\lim_{k\to\infty} \alpha_k^2\mathbb{E}\Vert G(x^k)\Vert^2 =0$, 
we see $\lim_{k\to\infty} \alpha_k^2 \Vert G(x(\lambda))\Vert^2 =0$. But $\Vert G(x(\lambda))\Vert^2= \sigma^2>0$, we must have $\lim_{k\to\infty} \alpha_k =0$. This verifies the first necessary condition which guarantees the existing of some $k_0 \in \mathbb{N}$ such that $\alpha_k\le\frac{1}{4 L_F}$ for $k\ge k_0$.  

To derive the second necessary condition, we use the lower bound $\mathbb{E}_{x_k}\Vert r^{k+1}\Vert^2 \geq \left(1- 2 \alpha_k L_F\right) \Vert r^k\Vert^2$ obtained from (\ref{middletermestlower}) 
 and the elementary inequality $\log(1-x)\ge-2x$ for $0<x\le\frac{1}{2}$ and find 
 \begin{align*}
   \mathbb{E}\Vert r^{k+1}\Vert^2 &\ge\prod_{i=k_0}^{k}(1 - 2 L_F \alpha_i )\prod_{i=1}^{k_0}(1 - 2 L_F \alpha_i ) \Vert r^{1}\Vert^2\\
   &\ge\exp\Big(-4 L_F\sum_{i=k_0}^k \alpha_i\Big)\prod_{i=1}^{k_0}(1 - 2 L_F \alpha_i ) \Vert r^{1}\Vert^2.
 \end{align*}
 Since $\prod_{i=1}^{k_0}(1 - 2 L_F \alpha_i ) \Vert r^{1}\Vert^2 >0$ and $\lim_{k\to\infty}\mathbb{E}\Vert r^{k+1}\Vert^2=0$, we must have $\sum_{k=1}^\infty\alpha_k=\infty$.
This completes the proof of Theorem~\ref{theorem_1}. 
\end{proof}

\subsection{Deriving rates of convergence}

The rates of convergence stated in Theorem~\ref{theorem_2} are derived by applying the upper bound (\ref{one_iter_bound}) obtained in the sufficiency part of the proof of Theorem~\ref{theorem_1} and Lemma \ref{lemma_4}. 

\begin{proof}[Proof of Theorem~\ref{theorem_2}.]
Following the upper bound (\ref{one_iter_bound}) obtained in the proof of Theorem~\ref{theorem_1}, we know from the choice $\alpha_k=\alpha_1 k^{-\theta}\le\frac{1}{2\mathcal{L}}$ that the residual error  $r^{k+1}=x^{k+1}-x(\lambda)$ can be estimated as 
\begin{eqnarray*}
\mathbb{E}[\Vert r^{k+1}\Vert^2] &\le& \prod_{i=1}^{k}\left(1-\frac{\mu_F \alpha_1}{i^{\theta}}\right) \Vert r^{1}\Vert^2 \\
&& +18\sigma^2 \sum_{i=1}^{k-1}\prod_{j=i+1}^k \left(1-\frac{\mu_F \alpha_1}{j^{\theta}}\right) \frac{\alpha_1^2}{i^{2\theta}}+ \frac{18\sigma^2 \alpha_1^2}{k^{2\theta}}.
\end{eqnarray*}

The first term in the above bound is easy to deal with. We have 
\begin{align*}
&\prod_{i=1}^{k}\left(1-\frac{\mu_F \alpha_1}{i^{\theta}}\right) 
\le\exp\left\{-\sum_{i=1}^{k}\frac{\mu_F \alpha_1}{i^{\theta}}\right\}\le\exp\left\{-\mu_F \alpha_1\int_{1}^{k+1} x^{-\theta}dx \right\}. 
\end{align*} 
So, when $0<\theta<1$, we have $\int_{1}^{k+1} x^{-\theta}dx = \frac{(k+1)^{1-\theta}-1}{1-\theta}$ and then apply Lemma~\ref{lemma_4} with $a=\frac{1}{1-\theta}, v= \frac{\mu_F \alpha_1}{1-\theta}$ to obtain 
$$ \prod_{i=1}^{k}\left(1-\frac{\mu_F \alpha_1}{i^{\theta}}\right) 
\le\exp\Bigg\{\frac{\mu_F \alpha_1}{1-\theta}\Bigg\}\left(\frac{1}{\mu_F \alpha_1 e}\right)^\frac{1}{1-\theta}k^{-1}. $$ 
When $\theta=1$, we also have $\int_{1}^{k+1} x^{-\theta}dx = \log (k+1)$ and then 
$$ \prod_{i=1}^{k}\left(1-\frac{\mu_F \alpha_1}{i^{\theta}}\right) 
\le (k+1)^{-\mu_F \alpha_1}. $$ 

The second term without a constant factor can be bounded first as 
\begin{align*}
&\sum_{i=1}^{k-1}\prod_{j=i+1}^k \left(1-\frac{\mu_F \alpha_1}{j^{\theta}}\right) \frac{1}{i^{2\theta}} \leq 
\sum_{i=1}^{k-1} \frac{1}{i^{2\theta}} \exp\Bigg\{-\mu_F \alpha_1\sum_{j=i+1}^{k} j^{-\theta}\Bigg\} = I_1 + I_2:\\
&=\sum_{k/2\le i\le k} \frac{1}{i^{2\theta}} \exp\Bigg\{-\mu_F \alpha_1\sum_{j=i+1}^{k} j^{-\theta}\Bigg\} +\sum_{1\le i< k/2} \frac{1}{i^{2\theta}} \exp\Bigg\{-\mu_F \alpha_1\sum_{j=i+1}^{k} j^{-\theta}\Bigg\}.
\end{align*} 
Then we bound $I_1$ and $I_2$ separately. 

Consider the case $0<
\theta<1$. For $I_1$, we observe that 
$$\sum_{j=i+1}^k j^{-\theta}\ge\int_{i+1}^{k+1}x^{-\theta}dx= \frac{(k+1)^{1-\theta}}{1-\theta}\left(1-\left(\frac{i+1}{k+1}\right)^{1-\theta}\right) $$
and the elementary inequality $(1-x)^{1-\theta}\le 1-(1-\theta)x$ for $0\leq x\leq 1$ yields 
$$\left(\frac{i+1}{k+1}\right)^{1-\theta} = \left(1-\frac{k-i}{k+1}\right)^{1-\theta} \leq 1- (1-\theta) \frac{k-i}{k+1}, $$
we have 
\begin{align*}
I_1&\le\sum_{k/2\le i\le k} \left(\frac{2}{k}\right)^{2\theta} \exp\Bigg\{-{\mu_F \alpha_1(k+1)^{-\theta}}(k-i)\Bigg\}\\
&\le \left(\frac{2}{k}\right)^{2\theta} \left\{1+  \int_0^{\frac{k}{2}}\exp\Bigg\{-{\mu_F \alpha_1 (k+1)^{-\theta}}x\Bigg\}dx\right\} \\
&\le  \left(\frac{2}{k}\right)^{2\theta} \left\{1+ \frac{(k+1)^\theta}{\mu_F \alpha_1}\right\}\le\frac{2^{3\theta}}{\mu_F \alpha_1}k^{-\theta}. 
\end{align*}
For $I_2$, we notice that $\frac{i+1}{k+1} \leq \frac{(k-1)/2 +1}{k+1} \leq \frac{1}{2}$ for $1 \leq i < k/2$ and $1/i^{2\theta} \leq 1$, and thereby 
$$
I_2 \le \frac{k}{2} \exp\Bigg\{-\frac{\mu_F \alpha_1 (1-2^{\theta-1})}{1-\theta}(k+1)^{1-\theta}\Bigg\}$$
which in connection with Lemma~\ref{lemma_4} with $a=\frac{2}{1-\theta}$ and $v=\frac{\mu_F \alpha_1 (1-2^{\theta-1})}{1-\theta}$ yields 
$$ I_2 \leq  \frac{k}{2}\left(\frac{2}{\mu_F \alpha_1 e(1-2^{\theta-1})}\right)^{\frac{2}{1-\theta}}(k+1)^{-2} \le \frac{1}{2}\left(\frac{2}{\mu_F \alpha_1 e(1-2^{\theta-1})}\right)^{\frac{2}{1-\theta}} k^{-1}.
$$
Combining the above estimates, we have 
$$
\mathbb{E}[\Vert r^{k+1}\Vert^2]\le C_{\mu, \alpha_1, \theta, \Vert r^{1}\Vert, n} k^{-\theta}$$
where $C_{\mu, \alpha_1, \theta, \Vert r^{1}\Vert, n}$ is a constant independent of $k$ given by 
\begin{align*} C_{\mu, \alpha_1, \theta, \Vert r^{1}\Vert, n} &= \exp\Bigg\{\frac{\mu_F \alpha_1}{1-\theta}\Bigg\}\left(\frac{1}{\mu_F \alpha_1 e}\right)^\frac{1}{1-\theta}\Vert r^{1}\Vert^2 \\
&+ 18\sigma^2 \alpha_1^2 \left(\frac{2^{3\theta}}{\mu_F \alpha_1} + \frac{1}{2}\left(\frac{2}{\mu_F \alpha_1 e(1-2^{\theta-1})}\right)^{\frac{2}{1-\theta}} +1\right). 
\end{align*}
This verifies the desired rate of convergence (\ref{ratesofconvergence}) in the case $0<\theta <1$. 

For the case $\theta=1$, we also have 
\begin{align*} 
I_1 + I_2 &\leq \sum_{i=1}^{k-1} \frac{1}{i^{2}} \exp\Bigg\{-\mu_F \alpha_1 \int_{i+1}^{k+1}\frac{1}{x}dx \Bigg\} \\
&= (k+1)^{-\mu_F \alpha_1} \sum_{i=1}^{k-1} \frac{(i+1)^{\mu_F \alpha_1}}{i^2} 
\leq 2^{\mu_F \alpha_1} (k+1)^{-\mu_F \alpha_1} \sum_{i=1}^{k-1} i^{\mu_F \alpha_1 -2}.  
\end{align*}
But 
$$ \sum_{i=1}^{k-1} i^{\mu_F \alpha_1 -2} \leq \left\{\begin{array}{ll} 
1 + \int_1^\infty x^{\mu_F \alpha_1 -2} d x \leq 1 + \frac{1}{1-\mu_F \alpha_1}, & \hbox{if}  \ \mu_F \alpha_1 <1, \\
1 + \int_1^k \frac{1}{x} d x \leq 1 + \log k, & \hbox{if}  \ \mu_F \alpha_1 =1, \\
1 + \int_1^{k} x^{\mu_F \alpha_1 -2} d x \leq 1 + \frac{k^{\mu_F \alpha_1 -1} - 1}{\mu_F \alpha_1 -1}, & \hbox{if}  \ \mu_F \alpha_1 > 1, 
\end{array}\right. $$
Therefore, we have 
$$
\mathbb{E}[\Vert r^{k+1}\Vert^2]\le C_{\mu, \alpha_1, \theta, \Vert r^{1}\Vert, n} 
\left\{\begin{array}{ll} 
k^{-\mu_F \alpha_1}, & \hbox{if}  \ \mu_F \alpha_1 <1, \\
\frac{1 + \log k}{k}, & \hbox{if}  \ \mu_F \alpha_1 =1, \\
\frac{1}{k}, & \hbox{if}  \ \mu_F \alpha_1 > 1. 
\end{array}\right. $$
where the constant $C_{\mu, \alpha_1, \theta, \Vert r^{1}\Vert, n}$ is independent of $k$ given by 
\begin{align*} C_{\mu, \alpha_1, \theta, \Vert r^{1}\Vert, n} &=  \Vert r^{1}\Vert^2 + 18\sigma^2 \alpha_1^2 
+ 18\sigma^2 \alpha_1^2 2^{\mu_F \alpha_1} \left\{\begin{array}{ll} 
\frac{2- \mu_F \alpha_1}{1-\mu_F \alpha_1}, & \hbox{if}  \ \mu_F \alpha_1 <1, \\
1, & \hbox{if}  \ \mu_F \alpha_1 =1, \\
\frac{\mu_F \alpha_1}{\mu_F \alpha_1 -1}, & \hbox{if}  \ \mu_F \alpha_1 > 1. 
\end{array}\right.
\end{align*}
This verifies the desired rate of convergence (\ref{ratesofconvergence}) for the case $\theta=1$ and completes the proof of Theorem~\ref{theorem_2} by taking $\mu_F =\frac{\mu}{n}$. 
\end{proof}


\end{document}